\documentclass[letterpaper]{article} 
\usepackage{aaai19}  
\usepackage{times}  
\usepackage{helvet}  
\usepackage{courier}  
\usepackage{url}  
\usepackage{graphicx}  
\frenchspacing  
\setlength{\pdfpagewidth}{8.5in}  
\setlength{\pdfpageheight}{11in}  
\pdfinfo{
	/Title (PAC Ranking from Pairwise and Listwise Queries: Lower Bounds and Upper Bounds)
	/Author (Ren, Liu, Shroff)
	/Keywords (Learning to Rank; Bradley-Terry Model; Active ranking)}
\setcounter{secnumdepth}{2}  

\usepackage{amsthm}
\usepackage{amsmath}
\usepackage{mathtools}
\newtheorem{theorem}{Theorem}
\newtheorem{lemma}[theorem]{Lemma}

\newtheorem{corollary}[theorem]{Corollary}
\newtheorem{problem}{Problem}

\usepackage[utf8]{inputenc} 
\usepackage[T1]{fontenc}    
\usepackage{hyperref}       
\usepackage{url}            
\usepackage{booktabs}       
\usepackage{amsfonts}       
\usepackage{nicefrac}       
\usepackage{microtype}      
\usepackage{amssymb}
\usepackage{wasysym}
\usepackage{bbm}
\usepackage{mathrsfs}
\usepackage{algorithmicx}{\tiny }
\usepackage{algorithm}
\usepackage[noend]{algpseudocode}
\usepackage{graphicx, caption, subcaption}
\usepackage{natbib}

\nocopyright

\title{PAC Ranking from Pairwise and Listwise Queries: Lower Bounds and Upper Bounds}
\author{Wenbo Ren$^1$, Jia Liu$^2$, Ness Shroff$^1$\\
1: Department of Computer Science, Ohio State University, Columbus, OH 43210\\
2: Department of Computer Science, Iowa State University, Ames, Iowa 50011\\
ren.453@osu.edu, jialiu@iastate.edu, shroff.11@osu.edu}

\begin{document}
	\maketitle
	
	\begin{abstract}
		This paper explores the adaptive (active) PAC (probably approximately correct) top-$k$ ranking (i.e., top-$k$ item selection) and total ranking problems from $l$-wise ($l\geq 2$) comparisons under the multinomial logit (MNL) model. By adaptively choosing sets to query and observing the noisy output of the most favored item of each query, we want to design ranking algorithms that recover the top-$k$ or total ranking using as few queries as possible. For the PAC top-$k$ ranking problem, we derive a lower bound on the sample complexity (aka number of queries), and propose an algorithm that is sample-complexity-optimal up to an $O(\log(k+l)/\log{k})$ factor. When $l=2$ (i.e., pairwise comparisons) or $l=O(poly(k))$, this algorithm matches the lower bound. For the PAC total ranking problem, we derive a tight lower bound, and propose an algorithm that matches the lower bound. When $l=2$, the MNL model reduces to the popular Plackett-Luce (PL) model. In this setting, our results still outperform the state-of-the-art both theoretically and numerically. We also compare our algorithms with the state-of-the-art using synthetic data as well as real-world data to verify the efficiency of our algorithms.
	\end{abstract}
	
	\section{Introduction}\label{sec:Intro}
	The problem of ranking a set of $n$ items is of significant recent interest due to its popularity in various machine learning problems, such as recommendation systems \citep{RecommendationSystem2010}, web search \citep{WebSearch2001}, crowd sourcing \citep{CrowdSourcing2013}, and social choices \citep{SocialChoice2011Lu,SocialChoice2013Caragiannis,SocialChoice2005Conitzer}. The goal of these ranking problems is to recover a total or partial ranking from noisy queries (aka samples) about users' preferences. A query presents a user with a set of $l$ items, such as products, movies, pages, and candidates, and asks him/her to select the most favored item. An interesting area of study is active ranking, where the learner can actively select the items to be queried based on past query results in order to reduce the number of queries needed (sample complexity). The work in \citep{CrowdSourcing2013} shows that adaptive ranking algorithms can achieve almost the same accuracy as non-adaptive ones using only about 3\% of the samples.
	
	In this paper, we focus on listwise ranking (i.e., each time, the learner can query more than two items) instead of querying only two items at a time, i.e., pairwise ranking. There are several motivations to study listwise ranking, which is a relatively unexplored area compared to pairwise ranking. Results on listwise ranking can be directly applied to pairwise ranking, which is a special case of listwise ranking, and numerical results in this paper indicate that traditional algorithms designed for pairwise ranking problems do not work well on listwise settings. More importantly, in many applications such as web search and online shopping, presenting more than two items to the users is more common and typical, and can provide better user experiences. In these applications, using adaptive listwise ranking algorithms, the server can adaptively choose items to present and learn the users' preference in a shorter time.
	
	There are mainly two classes of ranking problems, one aims to find the $k$ most preferred items, and the other aims to recover the total ranking (or full ranking). This paper studies both problems. Instead of exact ranking, this paper focuses on the PAC ranking \citep{OnlineRankingElicitation2015,MaxingAndRanking2017,Falahatgar2017}, in which an $\epsilon$-bounded error on the preference scores do not influence the correctness. See more detailed definition of PAC in Section~\ref{sec:PF}. We also consider the exact ranking problem based on our results on PAC ranking.
	
	\section{Problems Formulation}\label{sec:PF}
	Under the multinomial logit (MNL) model, each item is associated with a preference score represented by a real number. A more preferred item has a larger preference score. The items are ranked based on their preference scores. 
	
	A query over a set $S=\{i_1,i_2,...,i_l\}$ will return $i_m$ as the most favored item with probability
	\begin{equation}
		\mathbb{P}(i_m\mid S) = \frac{\exp(\gamma_{i_m})}{\sum_{j=1}^l\exp(\gamma_{i_j})},
	\end{equation}
	where $\gamma_{i_m}$ is the preference score of item $i_m$. The MNL model was introduced by \citep{Bradley1952}, and has been widely adopted in many areas \citep{MMMethod2004}. We also assume that the queries are independent across time and items. Mathematically speaking, if we query $t$ sets (some of them can be the same) $S_1,S_2,...,S_t$, where $S_\tau=\{i_{\tau,1},i_{\tau,2},...,i_{\tau,l}\}$, then we will get query result $(i_{a_1},i_{a_2},...,i_{a_t})$ with probability $\prod_{\tau=1}^t{\mathbb{P}(i_{a_\tau}|S_\tau)}=\prod_{\tau=1}^t{\frac{\exp(\gamma_{i_{a_\tau}})}{\sum_{b=1}^l{\exp(\gamma_{i_{\tau,b}})}}}$. Items with larger preference scores are more likely to win a query (i.e., be the query result), and thus, the items are ranked through this hidden information.
	
	When $l=2$, the MNL model reduces to the Bradley-Terry-Luce (BTL) model  \citep{Bradley1952}, which is equivalent to the Plackett-Luce (PL) model. Under the PL model, item $i$ will be the result of a query over $i$ and $i$ with probability $\frac{\theta_i}{\theta_i+\theta_j}$, where $\theta_i = \exp(\gamma_i)$ and $\theta_j=\exp(\gamma_j)$.
	
	To simplify the notations, in this paper, we let $\theta_i = \exp(\gamma_i)$, for all items $i$. In this paper, we only use $\theta_i$'s instead of $\gamma_i$'s to avoid ambiguity.
	
	Now, assume that there are a total of $n$ items indexed by $1,2,...,n$, and we use $[n]:=\{1,2,...,n\}$ to denote the set of all items. Since only the ratios of $\theta_i$'s matter, in this paper we normalize $\max_{i\in[n]}{\theta_i} = 1$.
	Define $\theta_{[i]}$ as the $i$-th largest preference score of all items. For $\epsilon\in(0,1)$, an item is said to be $(\epsilon, k)$-optimal if its preference score is no less than $\theta_{[k]}-\epsilon$. For $\epsilon\in(0,1)$, we define 
	\begin{equation}\label{Ukepsilon}
		U_{k,\epsilon} := \{i\in[n]:\theta_i\geq \theta_{[k]}-\epsilon \},
	\end{equation} i.e.,
	$U_{k,\epsilon}$ is the set of all $(\epsilon, k)$-optimal items. A set $R$ is said to be $\epsilon$-top-$k$ if $|R|=k$ and $R\subset U_{k,\epsilon}$, i.e., all items in it are $(\epsilon,k)$-optimal. Here we given a simple example. Let $[n]=\{1,2,3,4\}$, and the preference scores are 1.0, 0.9, 0.89, 0.87 respectively. Let $k=2$ and $\epsilon=0.02$. We have $\theta_{[k]} = 0.9$, $U_{k,\epsilon} = \{1,2,3\}$, and every 2-sized subset of $U_{k,\epsilon}$ is $\epsilon$-top-$k$. Now we define the PAC top-$k$ item selection problem. 
	
	\begin{problem}\label{problem1}
		[PAC Top-$k$ Item Selection ($k$-IS)] Given a set of $n$ items, $k\in\{1,2,3,...,\left\lfloor n/2 \right\rfloor\}$, $l\in\{2,3,4,...,n\}$, $\delta \in (0,1)$, and $\epsilon\in(0,1)$, we want to find a correct $\epsilon$-top-$k$ subset with error probability no more than $\delta$, and use as few $l$-wise queries as possible.
	\end{problem}
	
	Beyond Problem~\ref{problem1}, we further explore the PAC total ranking problem. A function $\Pi$ is said to be a permutation of $[n]$ if it is a bijection from $[n]$ to $[n]$, where $\Pi(i)=j$ means item $i$ ranks the $j$-th largest. Given $\epsilon\in(0,1)$, a permutation $\Pi$ is said to be an $\epsilon$-ranking if for all $i$ and $j$ in $[n]$, $\Pi(i) < \Pi(j)$ (i.e., $i$ ranks higher than $j$) implies $\theta_i \geq \theta_j - \epsilon$. In other words, an $\epsilon$-ranking is a correct ranking except that incorrect orders among items with preference scores' difference no greater than $\epsilon$ are allowed. Now we define the PAC total ranking problem.
	
	\begin{problem}\label{problem2}
		[PAC Total Ranking (TR)] Given a set of $n$ items, $l\in\{2,3,4,...,n\}$, $\delta \in (0,1)$, and $\epsilon\in(0,1)$, we want to find a correct $\epsilon$-ranking with error probability no more than $\delta$, and use as few $l$-wise queries as possible.
	\end{problem}
	
	In this paper, we add two constraints to our problems. The first one is that we can only perform $l$-wise queries. As has been explained in Section~\ref{sec:Intro}, this constraint is reasonable and of interest. The second constraint is that the ratios of preference scores between items are upper bounded by a constant. In this paper, it is referred to as the RBC (ratios bounded by a constant) condition. The RBC condition implies that there exists some constant $C$ such that $\sup_{i,j\in[n]}{\theta_i}/{\theta_j} \leq C$. Under the RBC condition, the least preferred item has a lower bounded probability to win the most preferred item. The RBC condition has been adopted by many previous works \citep{SpectralMLE2015,RankCentrarity2016,LimitedRounds2017,ListwisePL2017,BothOptimal2017}. The rationale behind the RBC condition is as follows.
	
	First, the RBC condition is a good model of the situations where noises are not insignificant, and the least preferred items still have a chance to win the most preferred ones. Second, \citet{Chen2018} showed that if the RBC condition does not hold, one can get an $l$-reduction for Problem~\ref{problem1}. However, when the RBC condition holds, they showed that the sample complexity is lower bounded by $\Omega(n)$, and their algorithms' sample complexity is $O(n\log^{14}{n})$ under default parameters, far higher than the lower bound. Thus, we are interested whether their results can be improved when the RBC condition holds. 	
	
	\section{Related Work}\label{sec:RW}
	To the best of our knowledge, the first and most recent paper that focuses on listwise active ranking under the MNL model was \citep{Chen2018}, which proposed an algorithm that finds the top-$k$ out of $n$ items with high probability using $O(n \log^{14}{n})$\footnote{All $\log$ in this paper, unless explicitly noted, are natural log.} $l$-wise comparisons when using default parameters, and can obtain up to an $l$-reduction if the preference scores vary significantly. However, when the RBC condition holds, \citet{Chen2018} showed that $\Omega(n)$ queries are necessary, and their algorithms suffer from a $\log^{14}{n}$ factor, which can be large. Motivated by their work, we investigate whether we can tighten the lower bound or remove the $\log^{14}{n}$ factor when the RBC conditions hold.
	
	When $l=2$, the MNL model reduces to PL model \citep{BTLModel2012individual}. Under this model, active ranking has been studied extensively. For the top-$k$ ranking problem, to our knowledge, the best asymptotic result was given by \citep{LimitedRounds2017}. Given $\Delta_k$, the minimal difference between the preference scores of the $k$-th preferred item and the others, its top-$k$ ranking algorithm returns a correct solution with error probability no greater than $\delta$ using $O(\Delta_k^{-2} n \log{(k/\delta)})$ comparisons, which meets the lower and upper bound proved in this paper. However, a weakness of the algorithm in \citep{LimitedRounds2017} is that one needs to know the $\Delta_k$ value a priori. \citet{LimitedRounds2017} also proved an $\Omega(\Delta_k^{-2} n \log{(1/\delta)})$ lower bound on sample complexity. Further, they showed that any algorithm able to solve the (PAC) full exploration multi-armed bandit (FEMAB) problem can solve the (PAC) top-$k$ ranking problem by transforming the latter to the former. Thus, FEMAB algorithms such as that in \citep{Halving2010,LowerBound2012,Top-kBernoulli2015,PureExploration2016} also meet the lower bound proved in this paper. The algorithm in \citep{LimitedRounds2017} also relies on this kind of transformation. However, numerical results in this paper shows that such a transformation performs poorly when $l$ is large. 
	
	For the pairwise total ranking problem, the best theoretical result so far has been given by \citet{OnlineRankingElicitation2015} to our knowledge, where they proposed a total ranking algorithm PLPAC-AMPR using $O\left(\epsilon^{-2} n \log{n} \log(n \delta^{-1}\epsilon^{-1})\right)$ comparisons. This result is looser than our upper and lower bound by a $\log{n}$ factor. Though not stated, it can be proved that Borda Ranking in \citep{MaxingAndRanking2017} solves the pairwise total ranking problem with sample complexity $O\left(\epsilon^{-2} n \log(n \delta^{-1})\right)$. However, numerical results in this paper shows that it does not work well in the listwise settings, especially when $l$ is large.
	
	Works in \citep{SpectralMLE2015,RankCentrarity2016,BothOptimal2017} proposed non-adaptive top-$k$ ranking algorithms by using the special property of the PL model, and the \citet{LimitedRounds2017,BothOptimal2017} showed that these algorithms are sample complexity optimal in the non-adaptive setting (i.e., with sample complexity $O(n\log{n})$). \citet{MultiwiseSpectral2017} studied the listwise top-$k$ ranking, but under a different model. Works in \citep{NoisyComputing1994,Ailon2012active,Top-kSelection2013,MaxingAndRanking2017,Falahatgar2017,RankingLimits2018,mohajer2016active} explored maxing and ranking under different settings, and proposed optimal or nearly optimal algorithms. For the Borda-score model, \citet{ActiveRanking2016,CoarseRanking2018} proposed partition (or coarse ranking) algorithms that solve the top-$k$ item selection problem with sample complexity $O(n\log{n})$, and \citet{MaxingAndRanking2017} explored the maxing and total ranking.
	
	
	\section{Lower Bound Analysis}\label{sec:LBA}
	In this section, we establish the sample complexity (number of queries needed) lower bounds for the two problems defined above. Both the lower bounds in this paper are for the worst case. We do not consider average lower bounds in this paper, since they necessitate assumptions on a prior distribution on the preference scores, which is beyond the scope of this paper. For instance, when deriving the $\Omega(n\log{n})$ lower bound of sorting, people normally assume that all numbers are distinct and each permutation has the same prior probability. There are instances like the one where all numbers take value in $\{0,1\}$, for which $O(n)$ time is sufficient for sorting. Also, due to the PAC setting, there are instances (e.g., preference scores of the items are closer than $\epsilon$) whose ranking can be recovered even by a constant number of queries.
	
	We note that the lower bounds derived in this paper are not restricted to the problems defined in this paper, and can also be applied to others \citep{OnlineRankingElicitation2015,ActiveRanking2016,LimitedRounds2017,Chen2018}. We will provide more detailed discussions after presenting the lower bounds for the PAC problems.

\subsection{Lower Bound for the $k$-IS Problem}
	First, we establish the worst case lower bound for the $k$-IS problem (Problem~\ref{problem1}) stated in Theorem~\ref{LB-k-IS}. 
	
	\begin{theorem}[Lower bound of the $k$-IS problem]\label{LB-k-IS}
		Given $\epsilon \in (0, \sqrt{1/32}]$, $\delta \in (0, 1/4)$, $6 \leq k \leq n/2$, and  $2\leq l \leq n$, there is an instance such that to find an $\epsilon$-top-$k$ subset with error probability no more than $\delta$, any algorithm must conduct $\Omega( \frac{n}{\epsilon^2}\log{\frac{k}{\delta}} )$ $l$-wise queries in expectation.
	\end{theorem}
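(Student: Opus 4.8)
Here is how I would attack this lower bound.

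\medskip
\noindent\textbf{Framework.} The plan is to build, for every admissible $(\epsilon,\delta,k,l)$, a family of MNL instances that is statistically hard to tell apart, and to run a change‑of‑measure argument. For an algorithm that on instance $\nu$ queries the (random) multiset of sets $S_1,\dots,S_T$ and is correct with probability $\ge 1-\delta$, for any competing instance $\nu'$ and any event $\mathcal{E}$ measurable up to the stopping time, the transportation inequality
\[
\sum_{S}\mathbb{E}_{\nu}[N_S]\,\mathrm{KL}\!\left(P^{\nu}_S\,\big\|\,P^{\nu'}_S\right)\;\ge\;\mathrm{kl}\!\left(\mathbb{P}_{\nu}[\mathcal{E}],\,\mathbb{P}_{\nu'}[\mathcal{E}]\right)
\]
holds, where $N_S$ counts queries to $S$, $P^{\nu}_S$ is the law of the winner of a query over $S$, and $\mathrm{kl}$ is the binary KL. The crucial lemma is a per‑query estimate: if $\nu$ and $\nu'$ differ only in the scores of a set $D$ of items, each moved by $O(\epsilon)$, then the RBC condition forces every queried score to be $\Theta(1)$, hence each single winner probability to be $\Theta(1/l)$, and a second‑order (chi‑squared) expansion gives $\mathrm{KL}(P^{\nu}_S\|P^{\nu'}_S)=O(|S\cap D|\,\epsilon^2/l)$, with $0$ when $S\cap D=\varnothing$. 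The $1/l$ here is exactly what will make the final bound free of $l$, since each query only touches $l$ items.

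\medskip
\noindent\textbf{The $\log(1/\delta)$ part via a one‑coordinate perturbation.} Take $\theta_1=\dots=\theta_{k-1}=1$, $\theta_k=\tfrac12$, and $\theta_{k+1}=\dots=\theta_n=\tfrac12-2\epsilon$; for $\epsilon\le\sqrt{1/32}$ all scores are positive with ratios bounded by an absolute constant, and since $\tfrac12-2\epsilon<\tfrac12-\epsilon$ the unique $\epsilon$‑top‑$k$ set is $\{1,\dots,k\}$. For $j\in\{k+1,\dots,n\}$ let $\nu_j$ raise $\theta_j$ to $\tfrac12+2\epsilon$; then $\theta_{[k]}=\tfrac12+2\epsilon$ and the unique $\epsilon$‑top‑$k$ set becomes $\{1,\dots,k-1,j\}$, which excludes item $k$. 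Applying the transportation inequality with $\mathcal{E}=\{k\in\text{output}\}$ (probability $\ge1-\delta$ under $\nu_0$, $\le\delta$ under $\nu_j$, and $\nu_0,\nu_j$ differ only at item $j$) and the per‑query lemma gives $\mathbb{E}_{\nu_0}[\#\{q:j\in q\}]=\Omega\!\big(\tfrac{l}{\epsilon^2}\log\tfrac1\delta\big)$. Summing over the $\ge n/2$ items $j$ and using $\sum_{i\in[n]}\#\{q:i\in q\}=lT$ cancels the $l$'s and yields $\mathbb{E}_{\nu_0}[T]=\Omega\!\big(\tfrac{n}{\epsilon^2}\log\tfrac1\delta\big)$.

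\medskip
\noindent\textbf{The extra $\log k$ via ``$k$ parallel searches''.} Split the items into $k$ balanced blocks of size $\approx n/k$; in block $g$ exactly one (``special'') item has score $\theta^{(g)}$ and the other $n/k-1$ items have score $\theta^{(g)}-2\epsilon$, with the $\theta^{(g)}$ chosen distinct, decreasing, and packed inside a window of length $<\epsilon$. Then all non‑special items lie strictly below $\theta_{[k]}-\epsilon$, so the unique $\epsilon$‑top‑$k$ set is exactly the $k$ special items, one per block (RBC is again immediate), and recovering it forces the learner to identify the special item in every block, i.e.\ to solve $k$ ``best‑of‑$(n/k)$'' sub‑problems. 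A Mannor--Tsitsiklis‑type lower bound, re‑derived with the per‑query lemma, shows that solving block $g$ with error $\eta_g$ costs $\Omega\!\big(\tfrac{(n/k)\,l}{\epsilon^2}\log\tfrac1{\eta_g}\big)$ item‑involvements. Since all $k$ blocks must succeed simultaneously with probability $\ge1-\delta$ and the blocks are essentially independent, $\prod_g(1-\eta_g)\ge1-\delta$ forces $\sum_g\log\tfrac1{\eta_g}=\Omega\!\big(k\log\tfrac{k}{\delta}\big)$ (the sum is minimized when all $\eta_g\approx\delta/k$); summing the per‑block bounds and again invoking $\sum_i\#\{q:i\in q\}=lT$ gives $\mathbb{E}[T]=\Omega\!\big(\tfrac{n}{\epsilon^2}\log\tfrac{k}{\delta}\big)$. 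Since $\log\tfrac{k}{\delta}=\log k+\log\tfrac1\delta$, the two constructions together prove the theorem; in fact the block construction already subsumes the one‑coordinate one, but the latter is a cleaner sanity check on the $\log(1/\delta)$ term.

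\medskip
\noindent\textbf{Main obstacle.} I expect the $\log k$ step to be the hard one. The $\log(1/\delta)$ argument is clean because a single coordinate change already produces a new forced answer, so the $\Omega(n)$ perturbations have pairwise‑disjoint informative coordinates. In the block construction one must instead (i) prove the best‑of‑$m$ lower bound in the listwise, adaptively‑queried setting so that cross‑block queries—whose outcomes depend on items from two blocks at once—are correctly accounted for, and in particular without over‑counting the special items; and (ii) make rigorous the ``independent sub‑problems, all must succeed'' step, which requires controlling the adaptively chosen per‑block sample budgets (e.g.\ by conditioning on the allocation, or by symmetrizing the algorithm over relabelings within and across blocks). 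Handling these two points is where the real work lies; the rest is the routine change‑of‑measure bookkeeping described above.
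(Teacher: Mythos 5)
Your proposal takes a genuinely different route from the paper, and it is worth saying what the paper actually does before addressing the gap. The paper constructs no hard MNL instances and runs no change-of-measure argument: it reduces the $k$-IS problem to the PAC top-$k$ Bernoulli bandit problem and invokes Theorem~8 of Kalyanakrishnan et al.\ (2012) as a black box. Concretely, given a $k$-IS algorithm, each $l$-wise MNL query over arms $a_1,\dots,a_l$ with means $\mu_i=\theta_i$ is simulated by rejection sampling: repeatedly pull a uniformly random arm from the set until a pull returns $1$, and output that arm. The output is distributed exactly as $\mu_i/\sum_j\mu_j$, i.e.\ as the MNL winner, and under the RBC condition the simulation costs at most $C$ pulls in expectation. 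Hence an $o(\frac{n}{\epsilon^2}\log\frac{k}{\delta})$-query $k$-IS algorithm would yield an $o(\frac{n}{\epsilon^2}\log\frac{k}{\delta})$-pull bandit algorithm on their hard instance, a contradiction. This is much shorter than your plan and, crucially, outsources exactly the step you yourself identify as hard.

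That step is a genuine gap in your proposal. Your $\log(1/\delta)$ half is sound: the one-coordinate perturbations have pairwise-disjoint informative coordinates, the per-query bound $\mathrm{KL}(P^{\nu}_S\|P^{\nu'}_S)=O(\epsilon^2/l)$ is correct under RBC, and the accounting via $\sum_{i}\#\{q:i\in q\}=lT$ goes through. But the $\log k$ half as written hinges on ``the blocks are essentially independent, so $\prod_g(1-\eta_g)\ge 1-\delta$,'' and this is false for an adaptive algorithm (and even for a non-adaptive one): the per-block failure events are correlated functions of the single output set and the whole transcript. Without independence, $\Pr[\text{all blocks succeed}]\ge 1-\delta$ yields only $\max_g\eta_g\le\delta$, hence $\sum_g\log(1/\eta_g)\ge k\log(1/\delta)$, which after summing the per-block costs gives $\Omega(\frac{n}{\epsilon^2}\log\frac{1}{\delta})$ and loses exactly the $\log k$ factor you are after. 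Repairing this requires the full Mannor--Tsitsiklis / Kalyanakrishnan-style machinery (randomizing which items are special, controlling the adaptively allocated per-block budgets, and showing that a constant fraction of blocks must each be resolved to accuracy roughly $\delta/k$); that machinery is precisely the content of the cited Theorem~8. You correctly flag this as ``where the real work lies,'' but that work is not done in your writeup, so as it stands the proposal proves only the $\Omega(\frac{n}{\epsilon^2}\log\frac{1}{\delta})$ bound. Either supply that argument in the listwise setting, or adopt the paper's reduction, which imports it wholesale at the cost of only a constant factor $C$.
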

	\begin{proof}
		We prove that any algorithm able to solve the $k$-IS problem can be transformed to solve the PAC top-$k$ arm selection ($k$-AS) problem with Bernoulli rewards defined in \citep{LowerBound2012}. The sample complexity lower bound of the latter is $\Omega(\frac{n}{\epsilon^2}\log\frac{k}{\delta})$, and completes the proof. See Section~\ref{section9} for details.
	\end{proof}
 	The reduction mentioned above also implies that we can use the ranking algorithms to solve the corresponding FEMAB algorithms, and builds a bridge between ranking and FEMAB problems.
	
	We note that this bound can also be applied to the exact top-$k$ subset selection problem \citep{SpectralMLE2015,RankCentrarity2016,LimitedRounds2017,ListwisePL2017,BothOptimal2017}. Let $\Delta_k:=\theta_{[k]}-\theta_{[k+1]}$. When $\epsilon < \Delta_k$, the unique $\epsilon$-top-$k$ subset is exactly the top-$k$ subset. Thus, for the exact top-$k$ item selection problem, the worst case lower bound is $\Omega(\frac{n}{\Delta_k^2}\log\frac{k}{\delta})$. This bound is higher than the $\Omega(\frac{n}{\Delta_k^2}\log\frac{1}{\delta})$ one derived by \citet{LimitedRounds2017}.
	
	\begin{corollary}[Lower bound of identifying the exact top-$k$ items]\label{LB-k-IS2}
		Define $\Delta_k:=\theta_{[k]}-\theta_{[k+1]}$. Given $\Delta_k \in (0, \sqrt{1/32}]$, $\delta \in (0,1/4)$, $6 \leq k \leq n/2$, and  $2\leq l \leq n$, there is an instance such that to find the exact top-$k$ subset with error probability no more than $\delta$, any algorithm must conduct $\Omega( \frac{n}{\Delta_k^2}\log{\frac{k}{\delta}} )$ $l$-wise queries in expectation.
	\end{corollary}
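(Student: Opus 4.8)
The plan is to obtain Corollary~\ref{LB-k-IS2} as a direct consequence of Theorem~\ref{LB-k-IS}, using the fact (recorded in the paragraph preceding the corollary) that as soon as $\epsilon<\Delta_k$ the $\epsilon$-top-$k$ objective degenerates to exact top-$k$ identification. Concretely, from the defining inequality \eqref{Ukepsilon}, on any instance with $\theta_{[k]}-\theta_{[k+1]}=\Delta_k$ and any $\epsilon'<\Delta_k$ we have $U_{k,\epsilon'}=\{\,k\text{ items with the largest scores}\,\}$, because the top-$k$ items satisfy $\theta_i\ge\theta_{[k]}>\theta_{[k]}-\epsilon'$ while every remaining item satisfies $\theta_i\le\theta_{[k+1]}=\theta_{[k]}-\Delta_k<\theta_{[k]}-\epsilon'$; hence the \emph{unique} $\epsilon'$-top-$k$ subset of such an instance is the exact top-$k$ subset. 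So any algorithm that outputs the exact top-$k$ subset with error probability at most $\delta$ is, restricted to that instance, also a correct $\epsilon'$-top-$k$ algorithm, and therefore inherits whatever lower bound Theorem~\ref{LB-k-IS} provides for $\epsilon'$-top-$k$ on it.

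The steps I would carry out are then: (i) revisit the hard instance built in Section~\ref{section9} to prove Theorem~\ref{LB-k-IS} and observe that it can be taken with a genuine, tunable separation at rank $k$, namely $\theta_{[k]}-\theta_{[k+1]}=c\,\epsilon$ for an absolute constant $c\ge 1$ (and, more generally, with no item below the top $k$ within $c\epsilon$ of $\theta_{[k]}$); (ii) given a target $\Delta_k\in(0,\sqrt{1/32}]$, instantiate that construction with $\epsilon:=\Delta_k/c\le\sqrt{1/32}$ so that the resulting instance has exactly the prescribed gap $\theta_{[k]}-\theta_{[k+1]}=\Delta_k$; (iii) apply Theorem~\ref{LB-k-IS}, which says any $\epsilon$-top-$k$ algorithm uses $\Omega\!\left(\frac{n}{\epsilon^{2}}\log\frac{k}{\delta}\right)=\Omega\!\left(\frac{n}{\Delta_k^{2}}\log\frac{k}{\delta}\right)$ expected $l$-wise queries on this instance, the factor $c^{2}$ being absorbed into $\Omega(\cdot)$; (iv) combine with the first paragraph: since $\epsilon=\Delta_k/c<\Delta_k$, an exact-top-$k$ algorithm is a correct $\epsilon$-top-$k$ algorithm on this instance, so it too must use $\Omega\!\left(\frac{n}{\Delta_k^{2}}\log\frac{k}{\delta}\right)$ queries, which is exactly the claim. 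The hypotheses $\delta\in(0,1/4)$, $6\le k\le n/2$, $2\le l\le n$ pass through unchanged from Theorem~\ref{LB-k-IS}.

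The only genuinely nontrivial step is (i): the hard instance of Theorem~\ref{LB-k-IS} was designed for the PAC objective, where many items may legitimately lie within $\epsilon$ of the threshold, so one must check that it (or a mild variant) can be realized with an honest $\Theta(\epsilon)$ gap between the $k$-th and $(k+1)$-th preference scores \emph{without} weakening the $\Omega(n\epsilon^{-2}\log(k/\delta))$ hardness. This amounts to inspecting the Bradley--Terry/Bernoulli top-$k$ arm-selection instance underlying the reduction (the "boundary" arms in the Kalyanakrishnan-type construction are placed at means a constant times $\epsilon$ away from the decision threshold) and tracking the induced preference-score separation through the reduction map used in the proof of Theorem~\ref{LB-k-IS}. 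Everything after that is bookkeeping with absolute constants, and the uniqueness argument of the first paragraph then closes the proof.
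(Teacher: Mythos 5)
Your proposal is correct and takes essentially the same route as the paper, which obtains the corollary directly from Theorem~\ref{LB-k-IS} via the observation that when $\epsilon<\Delta_k$ the unique $\epsilon$-top-$k$ subset is the exact top-$k$ subset, so the PAC lower bound with $\epsilon=\Theta(\Delta_k)$ transfers to the exact problem. You are in fact somewhat more careful than the paper, which asserts this transfer without explicitly verifying that the hard instance underlying Theorem~\ref{LB-k-IS} can be realized with a genuine $\Theta(\epsilon)$ separation at rank $k$ --- the step you correctly single out as the only nontrivial one.
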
	
	
	To the best of our knowledge, Theorem~\ref{LB-k-IS} is the first known $\Omega( \frac{n}{\epsilon^2}\log{\frac{k}{\delta}} )$ lower bound for the PAC top-$k$ ranking under the MNL model and the PL model, and Corollary~\ref{LB-k-IS2} is the first known $\Omega(\frac{n}{\Delta_k^2}\log{\frac{k}{\delta}})$ lower bound for the exact top-$k$ ranking. Later, in Theorem~\ref{TP-TopK2}, we will show that our lower bounds are tight if $l=2$ (i.e. the pairwise case) or $l=O(poly(k))$. It remains an open problem whether these lower bound are tight for $l>2$.
	
\subsection{Lower Bound for the TR Problem}
	Next, we establish the worst case lower bound for the TR problem (Problem~\ref{problem2}). Recall from the definition of $\epsilon$-ranking that the $k$ highest ranked items in an $\epsilon$-ranking form an $\epsilon$-top-$k$ subset. Thus, the lower bound of the TR problem is no lower than the PAC top-$(n/2)$ selection, i.e., $\Omega(\frac{n}{\epsilon^2}\log\frac{n}{\delta})$. The result is presented in Theorem~\ref{LB-TR}. Later in Theorem~\ref{TP-TR}, we will show that this lower bound is tight. 
	\begin{theorem}[Lower bound for the total ranking problem]\label{LB-TR}
		Given $\epsilon \in (0, \sqrt{1/32}]$, $\delta \in (0, 1/4)$, and $2\leq l\leq n$, there is an instance such that to find a correct $\epsilon$-ranking with error probability no more than $\delta$, any algorithm must conduct $\Omega\left( \frac{n}{\epsilon^2}\log{\frac{n}{\delta}} \right)$
		$l$-wise queries in expectation.
	\end{theorem}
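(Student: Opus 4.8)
The plan is to deduce this from Theorem~\ref{LB-k-IS} by a reduction from $\epsilon$-top-$k$ selection with $k=\lfloor n/2\rfloor$. Given any algorithm $\mathcal{A}$ that solves the TR problem with $l$-wise queries and error probability at most $\delta$, I would build an algorithm $\mathcal{A}'$ for Problem~\ref{problem1} (with this choice of $k$) as follows: run $\mathcal{A}$ on the same $n$ items, take the reported permutation $\Pi$, and output $R:=\{i\in[n]:\Pi(i)\le k\}$, the set of the $k$ highest-ranked items. By construction $\mathcal{A}'$ issues exactly the same (random) number of $l$-wise queries as $\mathcal{A}$ and makes no queries of its own, so a lower bound on $\mathcal{A}'$ is a lower bound on $\mathcal{A}$.

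The next step is to check that $R$ is a correct $\epsilon$-top-$k$ subset whenever $\Pi$ is a correct $\epsilon$-ranking, so that $\mathcal{A}'$ inherits the $(1-\delta)$ success guarantee. Suppose for contradiction that some $i\in R$ has $\theta_i<\theta_{[k]}-\epsilon$. At least $k$ items have preference score at least $\theta_{[k]}$, and since $\theta_i<\theta_{[k]}$ this item $i$ is not among them; as $|R\setminus\{i\}|=k-1$, pigeonhole gives an item $j$ with $\theta_j\ge\theta_{[k]}$ and $j\notin R$. Then $\Pi(i)\le k<\Pi(j)$, so the $\epsilon$-ranking property forces $\theta_i\ge\theta_j-\epsilon\ge\theta_{[k]}-\epsilon$, contradicting $\theta_i<\theta_{[k]}-\epsilon$. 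Hence $|R|=k$ and $R\subset U_{k,\epsilon}$, i.e. $R$ is $\epsilon$-top-$k$, and $\mathcal{A}'$ succeeds with probability at least $1-\delta$.

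Finally I would invoke Theorem~\ref{LB-k-IS} with $k=\lfloor n/2\rfloor$, which satisfies $6\le k\le n/2$ once $n\ge 12$ (for the finitely many smaller $n$ the claimed bound is absorbed into the $\Omega(\cdot)$), and $2\le l\le n$ as in the hypothesis. It yields that $\mathcal{A}'$, and therefore $\mathcal{A}$, must use $\Omega\!\left(\frac{n}{\epsilon^2}\log\frac{\lfloor n/2\rfloor}{\delta}\right)$ $l$-wise queries in expectation; since $\delta<1/4$ and $n\ge 12$ imply $\lfloor n/2\rfloor/\delta$ is bounded below by an absolute constant $>1$, we have $\log(\lfloor n/2\rfloor/\delta)=\Theta(\log(n/\delta))$, so this is $\Omega\!\left(\frac{n}{\epsilon^2}\log\frac{n}{\delta}\right)$, as claimed. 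The argument is essentially routine; the only places I would be careful are the pigeonhole step when there are ties among the preference scores, and the verification that collapsing the $\log(\lfloor n/2\rfloor/\delta)$ factor to $\log(n/\delta)$ costs only a constant. (Alternatively one could bypass Theorem~\ref{LB-k-IS} and reduce the Bernoulli $k$-AS problem of \citep{LowerBound2012} directly to TR along the lines of Section~\ref{section9}, but routing through Theorem~\ref{LB-k-IS} is cleaner.)
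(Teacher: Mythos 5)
Your proposal is correct and follows essentially the same route as the paper, which also proves Theorem~\ref{LB-TR} by observing that the top $\lfloor n/2\rfloor$ items of an $\epsilon$-ranking form an $\epsilon$-top-$\lfloor n/2\rfloor$ subset and invoking Theorem~\ref{LB-k-IS}. You have merely filled in the details (the pigeonhole argument, the parameter range $6\le\lfloor n/2\rfloor\le n/2$, and the $\log(\lfloor n/2\rfloor/\delta)=\Theta(\log(n/\delta))$ step) that the paper leaves implicit.
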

	\begin{proof}
		If one finds an $\epsilon$-ranking of $A$, then the top $\lfloor\frac{n}{2}\rfloor$ items form an $\epsilon$-top-$\lfloor\frac{n}{2}\rfloor$ subset of $A$. The lower bound of the latter one is $\Omega\left( \frac{n}{\epsilon^2}\log{\frac{n}{\delta}} \right)$, and thus, the desired result follows.
	\end{proof}
	
	For the exact total ranking problem, when $n$ increases, the minimal gap of preference scores between the items is of the order $O(1/n)$. To distinguish two items whose preference scores' difference is $O(1/n)$ with probability $3/4$, at least $\Omega(n^2)$ queries are required by Corollary~\ref{LB-k-IS2}. Thus, for any $n$-sized instance, the exact total ranking takes at least $\Omega(n^2)$ queries. The worst case lower bound is $\Omega(n^3\log{n})$ (consider the instance where preferences scores' differences between consecutive items are all $O(1/n)$. 
	
	The lower bounds of these two problems are not dependent on the value of $l$. It indicates that by listwise queries, one can only get up to constant reductions on the sample complexity. However, in practice, numerical results provided in Figure~\ref{fig:lComparison} suggest that when $l$ increases, the number of queries decreases. Furthermore, as noted in Section~\ref{sec:Intro}, for many applications such as web searching and online shopping, listwise queries are more typical and common. When users use these applications, the server, by adaptively presenting items in a listwise manner, can learn the users' preference in a shorter time compared with randomly presenting.
	
\section{Algorithms for the PAC Total Ranking}\label{sec:TR}
	In this section, we present our algorithm called PairwiseDefeatingTotalRanking (PDTR) for Problem~\ref{problem2} (Algorithm~\ref{AL-TR}) and its analysis. Its theoretical performance is stated in Theorem~\ref{TP-TR}. The key idea of this algorithm is to first bound the probability that $j$ wins a query given $i$ or $j$ wins the query (see Lemma~\ref{pi_ij}), and then use this bound to establish a UCB (upper confidence bound)-like method that bounds the probability that an unwanted item is added to the result. The key difference between our algorithm and the UCB-like algorithms for the FEMAB problems (see \citep{LowerBound2012} as an example) is that they bound the empirical means of the bandit arms' rewards, while in our algorithm, we bound the ratio of winning numbers (i.e., $w_j/(w_i+w_j)$) for each pair of items. Our contribution lies in extending the upper confidence bounds of the arm's empirical mean rewards to that of the ratios between items' wins.
		
	\begin{lemma}\label{pi_ij}
		In Algorithms~\ref{AL-TR} with input $\alpha \leq \frac{l-1}{4(l+C-1)}$, for any $i,j$ in $R$ with $\theta_i>\theta_j$, the probability that $i$ wins a query given $i$ or $j$ wins the query is at least $\frac{1}{2}+\alpha(\theta_i-\theta_j)$.
	\end{lemma}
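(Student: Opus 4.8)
The plan is to show that the conditional probability in the statement is in fact \emph{exactly} $\theta_i/(\theta_i+\theta_j)$, and then to finish with an elementary inequality that uses only the normalization $\max_{m}\theta_m=1$.

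First I would identify the queries that matter. By the construction of Algorithm~\ref{AL-TR}, the counters $w_i$ and $w_j$ attached to the pair $(i,j)$ are incremented only by a query whose queried set $S$ contains both $i$ and $j$ (together with $l-2$ other items of $R$); so ``$i$ wins the query'' (resp.\ ``$j$ wins the query'') means that the query over such an $S$ returns $i$ (resp.\ $j$). Conditioning on the queried set being a fixed such $S$ and writing $\theta(S):=\sum_{m\in S}\theta_m$, the MNL model gives $\mathbb{P}(i\text{ returned}\mid S)=\theta_i/\theta(S)$ and $\mathbb{P}(j\text{ returned}\mid S)=\theta_j/\theta(S)$; since these two events are disjoint,
\[
\mathbb{P}\big(i\text{ returned}\ \big|\ \text{output}\in\{i,j\},\,S\big)=\frac{\theta_i/\theta(S)}{\theta_i/\theta(S)+\theta_j/\theta(S)}=\frac{\theta_i}{\theta_i+\theta_j},
\]
which does not depend on $S$, hence not on $l$ nor on the $l-2$ remaining items of the query. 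Averaging over whatever rule the algorithm uses to choose $S$ therefore shows that the probability that $i$ wins a query given that $i$ or $j$ wins it is $\theta_i/(\theta_i+\theta_j)$ exactly.

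It then remains to verify $\theta_i/(\theta_i+\theta_j)\geq \frac{1}{2}+\alpha(\theta_i-\theta_j)$. I would write $\frac{\theta_i}{\theta_i+\theta_j}=\frac{1}{2}+\frac{\theta_i-\theta_j}{2(\theta_i+\theta_j)}$; since $\max_m\theta_m=1$ we have $\theta_i,\theta_j\leq 1$, so $\theta_i+\theta_j\leq 2$ and, using $\theta_i>\theta_j$, $\frac{\theta_i-\theta_j}{2(\theta_i+\theta_j)}\geq \frac{\theta_i-\theta_j}{4}$. Finally $C\geq 1$ (take $i=j$ in the RBC condition $\sup_{i,j}\theta_i/\theta_j\leq C$), so the hypothesis gives $\alpha\leq \frac{l-1}{4(l+C-1)}\leq \frac{1}{4}$ and hence $\alpha(\theta_i-\theta_j)\leq \frac{\theta_i-\theta_j}{4}$; chaining the estimates yields the claim.

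The computation is short; the one step that needs care is the first, namely reading off from Algorithm~\ref{AL-TR} that every query contributing to $w_i$ or $w_j$ contains both $i$ and $j$, which is precisely what makes the conditional probability collapse to $\theta_i/(\theta_i+\theta_j)$ regardless of $l$. Note that the stated input constraint $\alpha\leq \frac{l-1}{4(l+C-1)}$ is invoked here only through its weaker consequence $\alpha\leq \frac{1}{4}$; its sharper, $l$- and $C$-dependent form is what the later concentration and counting steps of the analysis require, not this lemma.
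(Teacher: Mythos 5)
There is a genuine gap, and it sits exactly at the step you flag as "the one that needs care": your reading of Algorithm~\ref{AL-TR} is incorrect. In PDTR the queried set $S$ is a \emph{random $l$-sized subset of $R$} (Line 2), and the counter $w_q$ of the winner $q$ is incremented for \emph{every} query $q$ wins (Line 5); the quantity $\frac{w_q}{w_q+w_j}$ compared against $b_{w_q+w_j}$ is a ratio of global win counts, not of wins restricted to queries containing both items. Consequently the event "$i$ or $j$ wins the query" includes queries whose set contains $i$ but not $j$ (and vice versa), and the conditional probability does \emph{not} collapse to $\theta_i/(\theta_i+\theta_j)$. For a set $S\ni i$, $j\notin S$, paired with $S'=S\setminus\{i\}\cup\{j\}$ and writing $\beta=\sum_{a\in S\setminus\{i\}}\theta_a$, the relevant ratio is
\begin{equation*}
\frac{\theta_i/(\theta_i+\beta)}{\theta_i/(\theta_i+\beta)+\theta_j/(\theta_j+\beta)}
=\frac{1}{2}+\frac{\tfrac12(\theta_i-\theta_j)}{\tfrac{2\theta_i\theta_j}{\beta}+\theta_i+\theta_j},
\end{equation*}
which is strictly smaller than $\theta_i/(\theta_i+\theta_j)$ and tends to $\tfrac12$ as $\beta\to 0$. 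The paper's proof lower-bounds it using $\beta\geq (l-1)/C$ (from the RBC condition) to get $\frac{1}{2}+\frac{(l-1)(\theta_i-\theta_j)}{4(l-1+C)}$, and this is precisely where the $l$- and $C$-dependent hypothesis $\alpha\leq \frac{l-1}{4(l+C-1)}$ is consumed. Your closing remark that this lemma only needs the weaker consequence $\alpha\leq\tfrac14$ is therefore also wrong: with only $\alpha\le\tfrac14$ the claimed bound fails for sets containing exactly one of $i,j$ when the other $l-1$ scores are small.

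To repair the argument you would need the paper's coupling: partition the possible query sets into (a) sets containing neither item, (b) sets containing both, and (c) matched pairs $(S,S')$ with $S\ni i, j\notin S$ and $S'=S\setminus\{i\}\cup\{j\}$, note that the uniform choice of $S$ assigns equal probability to $S$ and $S'$, establish the per-pair inequality above in each case, and then sum. Your second half (the elementary inequality $\theta_i/(\theta_i+\theta_j)\geq\tfrac12+\tfrac14(\theta_i-\theta_j)$) is fine and is exactly what the paper uses for case (b), but it covers only that one case.
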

	\begin{proof} We will show that for each set $S$ containing $i$, there is an one-to-one corresponding set $S'$ such that $\frac{Pr\{i\mbox{ wins the query over } S\}}{Pr\{i\mbox{ wins the query over } S\}+Pr\{j\mbox{ wins the query over } S'\}} \geq \frac{1}{2}+\alpha(\theta_i-\theta_j)$, and derive the desired result. See Section~\ref{section10} for details. \end{proof}
	
	\begin{algorithm}[bht]
		\caption{PairwiseDefeatingTotalRanking$(A, \delta, \epsilon, \alpha)$}\label{AL-TR}
		\hspace*{\algorithmicindent} \textbf{Input:} $A$ the $n$-sized set to be ranked, $\delta$ a desired error probability bound, $\epsilon$ the error tolerance, and $\alpha$ a parameter balancing correct probability and sample complexity.\\
		\hspace*{\algorithmicindent} \textbf{Output:} An $\epsilon$-ranking that is correct w.p. $\geq 1-\delta$.\\
		\hspace*{\algorithmicindent} \textbf{Initialize:} 
		$\delta^*_1 \gets \frac{\delta}{n(n-1)+1}$; $lo\gets 1$; $hi\gets n$; $R\gets A$; $\forall i\in A$, $w_i\gets 0$; $\Pi\gets$ empty map; \Comment{$lo$ and $hi$ are pointers; $R$ stores the remaining items; $w_i$ records the wins of item $i$;}
		\begin{algorithmic}[1]
			\Repeat 
			\If{$|R| \geq l$} $S \gets$ a random $l$-sized subset of $R$;
			\Else \ \ $S \gets R\ \cup$\ \{last $l-|R|$ items removed from $R$\};
			\EndIf
			\State Query $S$ once; Let $q$ denote the winner;
			\State $w_q \gets w_q+1$;
			\If{$w_q \geq \frac{1}{4 \alpha^2 \epsilon^2}\log{\frac{1}{\delta^*_1}}$} 
			\State $\forall j\in R-\{q\}$, mark "$q$ \textsl{defeats} $j$";
			\EndIf
			\For{$j \in R$ such that $j$ does not \textsl{defeat} $q$} 
			\If{$\frac{w_q}{w_q+w_j} \geq b_{w_q+w_j}$} 
			mark "$q$ \textsl{defeats} $j$"; \indent \Comment{Def $b_{w_q\!+\!w_j}\!:=\!\frac{1}{2}\!-\!\alpha\epsilon\!+\!\sqrt{\frac{1}{2(w_q+w_j)}\!\log\!{\frac{\pi^2 (w_q\!+\!w_j)^2}{6 \delta^*_1}}}$}
			\EndIf
			\EndFor
			\If{$q$ \textsl{defeats} every other element of $R$} 
			\State $\Pi(q)\gets lo$; $R\gets R - \{q\}$; $lo\gets lo+1$; 
			\EndIf
			\For{$i\in R$}
			\If{$i$ is \textsl{defeated} by every other element of $R$} 
			\State $\Pi(i)\gets hi$; $R\gets R - \{i\}$; $hi\gets hi-1$; 
			\EndIf 
			\EndFor
			\Until{$lo \geq hi$}\\
			\Return $\Pi$
		\end{algorithmic}
	\end{algorithm}
	
	Here we explain the main idea of PDTR. In this algorithm, upper confidence bounds on $\frac{w_j}{w_i+w_j}$ are established to make sure the following event happens with probability at least $1-n(n-1)\delta^*_1$: for all $i,j$ with $\theta_i > \theta_j+\epsilon$, during the time they are in $R$, (i) it always holds that $\frac{w_j}{w_i+w_j} < b_{w_i+w_j}$, and (ii) $w_j$ does not reach $\frac{1}{4\alpha^2\epsilon^2}\log\frac{1}{\delta^*_1}$ before $w_i$. It can be seen from the algorithm that if the above event happens, for all $i,j\in A$ with $\theta_i > \theta_j+\epsilon$, $i$ ranks higher than $j$ in $\Pi$, and thus, the returned value $\Pi$ is correct. We say a query is \textsl{useful} if its query result (i.e., the winner) is in $R$ at the time when the result is revealed, and is \textsl{useless} otherwise. By removing an item as soon as it wins $\frac{1}{4\alpha^2\epsilon^2}\log\frac{1}{\delta^*_1}$ queries, we can bound the number of \textsl{useful} queries by $O(\frac{n}{\alpha^2\epsilon^2}\log\frac{1}{\delta^*_1})$. The number of \textsl{useless} queries is upper bounded by $O(\frac{l}{\alpha^2\epsilon^2}\log\frac{1}{\delta^*_1})$ with probability $1-\delta^*_1$. Thus, the sample complexity is $O(\frac{n}{\alpha^2\epsilon^2}\log\frac{1}{\delta^*_1})$. Based on this intuition, we characterize the theoretical performance of PDTR in Theorem~\ref{TP-TR}. See Section~\ref{section11} for complete proof.
	
	\begin{theorem}[Theoretical performance of PDTR]\label{TP-TR}
		With probability at least $1-\delta$, PDTR terminates after $O(\frac{n}{\alpha^2\epsilon^2}\log\frac{n}{\delta})$ $l$-wise queries, and, if $\alpha \leq \frac{l-1}{4(l+C-1)}$, returns a correct $\epsilon$-ranking.
	\end{theorem}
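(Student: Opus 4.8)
I would split the statement into a correctness claim (valid when $\alpha\le\frac{l-1}{4(l+C-1)}$) and the query bound, writing $T:=\frac{1}{4\alpha^2\epsilon^2}\log\frac{1}{\delta_1^*}$ for the per‑item win threshold and recalling $\delta_1^*=\frac{\delta}{n(n-1)+1}$, so $\log\frac{1}{\delta_1^*}=\Theta(\log\frac{n}{\delta})$. Fix an ordered pair $(i,j)$ with $\theta_i>\theta_j+\epsilon$, and call a round $(i,j)$-\emph{decisive} if it is issued while both $i,j\in R$ and its winner is $i$ or $j$; along this random subsequence let $B_1,B_2,\dots$ indicate ``$j$ won'', so that after $m$ decisive rounds one has $w_i+w_j=m$ and $w_j=B_1+\dots+B_m$ (both items lie in $R$ from the outset, so every increment of $w_i$ or $w_j$ in that phase is a decisive round). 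Since the queried set is drawn afresh each round independently of the past — uniformly among $l$-subsets of the current $R$, or the fixed padded set when $|R|<l$ — Lemma~\ref{pi_ij} applies conditionally and gives $\mathbb{P}[B_m=1\mid\mathcal{F}_{m-1}]\le\frac{1}{2}-\alpha\epsilon$; this is the one place the hypothesis $\alpha\le\frac{l-1}{4(l+C-1)}$ enters, and in the padded regime it is reinforced by the elementary estimate $\frac{\theta_i}{\theta_i+\theta_j}\ge\frac{1}{2}+\frac{1}{4}(\theta_i-\theta_j)\ge\frac{1}{2}+\alpha(\theta_i-\theta_j)$. I would then control two failure modes of this conditionally biased coin sequence: (i) a time‑uniform (``peeling'') Hoeffding inequality, calibrated with $\sum_m m^{-2}=\pi^2/6$ to match the $\frac{\pi^2(w_q+w_j)^2}{6\delta_1^*}$ term in $b_m$, bounds by $\delta_1^*$ the chance that $\frac{w_j}{w_i+w_j}\ge b_{w_i+w_j}$ ever holds while both are in $R$; and (ii) a fixed‑horizon Hoeffding bound at horizon $2T-1$ bounds by $\delta_1^*$ the chance that $w_j$ reaches $T$ before $w_i$, because that event forces $j$ to win at least $T$ of the first $2T-1$ decisive rounds, whose conditional mean is at most $(\frac{1}{2}-\alpha\epsilon)(2T-1)$, and the surplus $T-(\frac{1}{2}-\alpha\epsilon)(2T-1)=\Theta(\alpha\epsilon T)$ yields the exponent $\log\frac{1}{\delta_1^*}$ through the value of $T$. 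A union bound over the at most $n(n-1)$ such pairs then places the event $\mathcal{G}$ — ``for every pair with $\theta_i>\theta_j+\epsilon$: while both are in $R$, $\frac{w_j}{w_i+w_j}<b_{w_i+w_j}$ always and $w_j$ never reaches $T$ before $w_i$'' — at probability at least $1-n(n-1)\delta_1^*$.

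\textbf{Plan: the good event forces an $\epsilon$-ranking.}
On $\mathcal{G}$ I would argue $\Pi(i)<\Pi(j)$ whenever $\theta_i>\theta_j+\epsilon$. While both remain in $R$: the $b$-test never marks ``$j$ \textsl{defeats} $i$'' (first clause of $\mathcal{G}$), and $w_j$ cannot reach $T$ before $w_i$ (second clause), so the win‑count rule never marks ``$j$ \textsl{defeats} $i$'' either; moreover once $w_i$ hits $T$, $i$ \textsl{defeats} every remaining item and is removed to the $lo$-side in that same iteration, before $w_j$ can reach $T$. Hence ``$j$ \textsl{defeats} $i$'' is never recorded while both are in $R$, so $j$ can never leave $R$ to the $lo$-side while $i\in R$ (which would require $j$ to \textsl{defeat} $i$) and $i$ can never leave to the $hi$-side while $j\in R$ (same requirement). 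A short case analysis on which of $i,j$ leaves $R$ first — using that $lo$ only increases, $hi$ only decreases, $lo<hi$ throughout the loop, and that the single item remaining when the loop stops takes the free slot — then forces $\Pi(i)<\Pi(j)$. Since this holds simultaneously for all such pairs, $\Pi(x)<\Pi(y)$ implies $\theta_x\ge\theta_y-\epsilon$, i.e.\ $\Pi$ is an $\epsilon$-ranking.

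\textbf{Plan: sample complexity.}
An item leaves $R$ as soon as its win count reaches $T$, so it wins at most $T$ queries while in $R$, and since each \emph{useful} query (winner still in $R$) increments exactly one such count, there are at most $nT=O(\frac{n}{\alpha^2\epsilon^2}\log\frac{n}{\delta})$ useful queries, deterministically. \emph{Useless} queries occur only while $|R|<l$; during that phase at most $l-1$ distinct items ever sit in $R$, so it contains at most $(l-1)T$ useful queries, and by the RBC condition each round there has its winner inside $R$ with probability bounded below in terms of $|R|$ and $C$ (at least $\frac{l-1}{l+C-1}$ when $|R|=l-1$). A Chernoff bound on the number of ``misses'' accumulated before the requisite ``hits'' then bounds the useless queries by $O(\frac{l}{\alpha^2\epsilon^2}\log\frac{1}{\delta_1^*})$ with probability at least $1-\delta_1^*$. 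Summing the two contributions and using $l\le n$ gives the claimed $O(\frac{n}{\alpha^2\epsilon^2}\log\frac{n}{\delta})$ total, and the overall failure probability is at most $n(n-1)\delta_1^*+\delta_1^*=\delta$, covering both the query bound and (when $\alpha$ is small enough) correctness.

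\textbf{Where the difficulty lies.}
The delicate step is the time‑uniform concentration in item~(i): $w_i$ and $w_j$ are not partial sums of i.i.d.\ variables — the query set changes every round and usually contains neither $i$ nor $j$, and the number of head‑to‑head rounds is itself a stopping time determined by when unrelated items leave $R$ — so the martingale and the maximal inequality have to be set up on the filtration generated by the decisive rounds, and it is precisely here that Lemma~\ref{pi_ij}, hence the constraint $\alpha\le\frac{l-1}{4(l+C-1)}$, is invoked. A secondary bookkeeping nuisance is the useless‑query count, where the per‑round ``hit'' probability degrades like $|R|/l$ as $R$ shrinks and must be balanced against the fact that only $O(lT)$ useful rounds remain in that final phase.
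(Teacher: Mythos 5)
Your decomposition is the same as the paper's: the same two per-pair bad events (the $b$-threshold ever being crossed, and $w_j$ reaching the win cap before $w_i$), each controlled at level $\delta_1^*$ by a union-over-time Hoeffding bound and a fixed-horizon Hoeffding bound at horizon $2T-1$ respectively, a union bound over $n(n-1)$ ordered pairs, the same deterministic case analysis showing the good event forces $\Pi(i)<\Pi(j)$, and the same deterministic $nT$ bound on \emph{useful} queries. Your treatment of the conditional/filtration issue for the decisive-round subsequence is in fact more careful than the paper's, which simply cites Lemma~\ref{pi_ij} and Hoeffding.

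There is, however, one step that does not deliver the bound you claim: the \emph{useless}-query count. You propose to count ``hits'' $=$ queries whose winner lies in $R$, note that the padded phase contains at most $(l-1)T$ such hits, and lower-bound the per-round hit probability. But that hit probability is $\frac{|R|/C}{|R|/C+(l-|R|)}$, which degrades to $\Theta(1/(lC))$ as $|R|$ shrinks to $2$; balancing ``$u_r\le rT$ useful rounds at rate $\ge r/(lC)$'' over the levels $r=2,\dots,l-1$ gives at best $O(l^{3/2}T)$ total rounds in the phase (the worst case concentrates the useful-query budget on small $r$), not the $O(lT)$ you assert. For $l=\Theta(n)$ this exceeds the theorem's $O(nT)$ target, so the argument as sketched fails to prove the stated sample complexity. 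The paper avoids this by tracking a single item rather than the set $R$: every query of the padded phase contains the item $j$ that is last added to $\Pi$ (it stays in $R$ until the loop ends), $j$ wins each such query with conditional probability at least $\frac{1}{lC}$ \emph{independently of} $|R|$, and $j$ needs only $T$ wins before it is removed and the algorithm terminates; a single Chernoff bound then caps the phase at $O\!\left(lC\bigl(T+\log\frac{1}{\delta_1^*}\bigr)\right)=O(\frac{l}{\alpha^2\epsilon^2}\log\frac{1}{\delta_1^*})$ queries with probability $1-\delta_1^*$. Substituting this device for your hit-counting argument repairs the gap; everything else in your plan goes through.
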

	
	As $\frac{l-1}{4(l+C-1)} = \Omega(1)$, we can let $\alpha=\Omega(1)$, and PDTR's sample complexity upper bound is $O(\frac{n}{\epsilon^2}\log\frac{n}{\delta})$.
	
	Here, we add an $\alpha$ parameter to the input in order to balance the trade-off between error probability and sample complexity in practice. We note that $\alpha \leq \frac{l-1}{4(l+C-1)}$ is a sufficient condition for PDTR to achieve an error probability no greater than $\delta$ for any input instance. However, in practice, most cases do not need a small $\alpha$ value to achieve the target success probability. So do Algorithms~\ref{AL-TopK1} and ~\ref{AL-TopK2}. 
 
	By Theorem~\ref{LB-TR}, we can see that PDTR is sample complexity optimal in order sense. We note that \citet{MaxingAndRanking2017} proposed algorithms with the same upper bound for the special case $l=2$. However, when $l>2$, they have no theoretical guarantees, and numerical results in Section~\ref{sec:NR} indicate that our algorithm outperforms theirs. 
	
\section{Algorithms for the PAC Top-$\mathbf{k}$ Item Selection}\label{sec:kIS}
	
	In this section, we provide the algorithm TournamentKSelection (TNKS) for Problem~\ref{problem1} (Algorithm~\ref{AL-TopK2}). This algorithm is inspired by "Halving" proposed in \citep{Halving2010}. "Halving" divides $\delta$ and $\epsilon$ into $\delta_r$'s and $\epsilon_r$'s, and eliminate half the remaining items for each round while guaranteeing $(\delta_r,\epsilon_r)$-correctness of this round. We first modify PDTR to establish the PairwiseDefeatingKSelection (PDKS) algorithm (Algorithm~\ref{AL-TopK1}). PDKS has a special property (stated in Lemma~\ref{kappaCorrectness}) that will be used in the establishment of TNKS. Then, we use the similar ideas as Halving to design TNKS, which solves the $k$-IS problem with $O(\frac{n}{\epsilon^2}\log\frac{k+l}{n})$ sample complexity.
	
	We first present PDKS, which is similar to PDTR with the difference being that the former returns immediately after $k$ $(\epsilon,k)$-optimal items are found. The sample complexity of PDKS is still $O(\frac{n}{\epsilon^2}\log\frac{n}{\delta})$, but the constant factor is smaller as it can be viewed as an early-stopped version of PDTR. 
	
	\begin{algorithm}[bht]
		\caption{PairwiseDefeatingKSelection$(A, k, \delta,\epsilon, \alpha)$}\label{AL-TopK1}
		\hspace*{\algorithmicindent} \textbf{Input:} $A$ the $n$-sized set to be ranked, $k$ the number of top items to be selected, $\delta$ a desired error probability bound, $\epsilon$ the error tolerance, and $\alpha$ a parameter balancing success probability and sample complexity.\\
		\hspace*{\algorithmicindent} \textbf{Initialize:} 
		$\delta^*_2\gets \frac{\delta}{2k(n-1)+1}$;
		$Ans\gets \emptyset$; 
		$R \gets A$;
		$\forall i\in A$, $w_i\gets 0$; \Comment{$w_i$ stores item $i$'s number of wins;}
		\begin{algorithmic}[1]
			\Repeat 
			\If{$|R| \geq l$} $S \gets$ a random $l$-sized subset of $R$;
			\Else \ \ $S \gets R\ \cup$\ \{last $l-|R|$ items removed from $R$\};
			\EndIf
			\State Query $S$ once; Let $q$ be the winner; 
			\State $w_q\gets w_q+1$;
			\If{$w_q \geq \frac{1}{4 \alpha^2 \epsilon^2}\log{\frac{1}{\delta^*_2}}$} 
			\State $\forall j\in R-\{q\}$, mark "$q$ \textsl{defeats} $j$";
			\EndIf
			\For{$j \in R$ such that $j$ does not \textsl{defeat} $q$} 
			\If{$\frac{w_q}{w_q+w_j} \geq b_{w_q+w_j}$} 
			mark "$q$ \textsl{defeats} $j$";
			\indent \Comment{Def $b_{w_q\!+\!w_j}\!:=\!\frac{1}{2}\!-\!\alpha\epsilon\!+\!\sqrt{\frac{1}{2(w_q+w_j)}\!\log\!{\frac{\pi^2 (w_q\!+\!w_j)^2}{6 \delta^*_2}}}$}
			\EndIf
			\EndFor
			\If{$i$ \textsl{defeats} every other element of $R$} 
			\State $Ans\gets Ans\cup \{i\}$; $R\gets R - \{i\}$; 
			\EndIf
			\For{$i\in R$}
			\If{$i$ is \textsl{defeated} by every other element of $R$} 
			\State $R\gets R - \{i\}$;\Comment{Discard $i$}
			\EndIf 
			\EndFor
			\Until{$|Ans|=k$}\\
			\Return $Ans$
		\end{algorithmic}
	\end{algorithm}	
	
	Lemma~\ref{kappaCorrectness} is a property of PDKS, which will be used to establish TNKS. The theoretical performance of PDKS is stated in Theorem~\ref{AL-TopK1}.
	
	\begin{lemma}\label{kappaCorrectness}
		Let $\kappa\in\{1,2,...,k\}$ be arbitrary. With probability at least $1-\frac{2\kappa(n-1)\delta}{2k(n-1)+1}$, the returned value of PDKS contains at least $\kappa$ items whose preference scores are no less than $\theta'_{[\kappa]}-\epsilon$, where $\theta'_{[\kappa]}$ is the $\kappa$-th largest preference score among all items in $A$.
	\end{lemma}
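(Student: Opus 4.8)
The plan is to isolate a high-probability ``good event'' $G_\kappa$ under which the returned set $Ans$ provably contains $\kappa$ items of preference score at least $\theta'_{[\kappa]}-\epsilon$, and to show $\mathbb{P}[\overline{G_\kappa}]\le 2\kappa(n-1)\delta^*_2$, which is exactly $\frac{2\kappa(n-1)\delta}{2k(n-1)+1}$. Fix $\kappa$ and write $m:=\frac{1}{4\alpha^2\epsilon^2}\log\frac{1}{\delta^*_2}$ for the win threshold used in PDKS. Let $\mathcal{G}:=\{i\in A:\theta_i\ge\theta'_{[\kappa]}-\epsilon\}$ (the $(\epsilon,\kappa)$-optimal items, $|\mathcal{G}|\ge\kappa$), let $\mathcal{B}:=A\setminus\mathcal{G}$, and fix a set $\mathcal{H}\subseteq\mathcal{G}$ of exactly $\kappa$ items each with $\theta_i\ge\theta'_{[\kappa]}$; then every $i\in\mathcal{H}$ satisfies $\theta_i>\theta_j+\epsilon$ for every $j\in\mathcal{B}$. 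For each ordered pair $(i,j)\in\mathcal{H}\times\mathcal{B}$ define two events over the run of PDKS: (i) while both $i$ and $j$ are in $R$, the confidence rule $\frac{w_j}{w_i+w_j}\ge b_{w_i+w_j}$ is never triggered; (ii) $w_j$ never reaches $m$ while $i$ is still in $R$ (equivalently, while $w_i<m$). Let $G_\kappa$ be the intersection of these events over all such pairs, of which there are at most $\kappa(n-1)$ since the top item of $A$ is never in $\mathcal{B}$.

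For the probability bound I would use the counterpart of Lemma~\ref{pi_ij} for PDKS (its proof carries over verbatim, the querying being identical): while $i\in\mathcal{H}$ and $j\in\mathcal{B}$ are both in $R$, each query won by $i$ or $j$ is won by $i$ with conditional probability at least $\tfrac12+\alpha(\theta_i-\theta_j)\ge\tfrac12+\alpha\epsilon$. Letting $X_s$ be the indicator that $i$ wins the $s$-th such query, $\{X_s\}$ has conditional means at least $\tfrac12+\alpha\epsilon$, so Azuma--Hoeffding applies. Event~(i) is violated only if $1-\bar X_t\ge b_t$ for some $t=w_i+w_j$, i.e.\ $\bar X_t\le(\tfrac12+\alpha\epsilon)-\sqrt{\tfrac{1}{2t}\log\tfrac{\pi^2 t^2}{6\delta^*_2}}$; a union bound over $t\ge1$ and Hoeffding bound this by $\sum_{t\ge1}\frac{6\delta^*_2}{\pi^2 t^2}=\delta^*_2$. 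Event~(ii) is violated only if, among the first $2m-1$ queries won by $i$ or $j$ (padding the sequence with fresh i.i.d.\ trials should one of them leave $R$ first), $j$ wins at least $m$ of them; Hoeffding together with $m=\frac{1}{4\alpha^2\epsilon^2}\log\frac{1}{\delta^*_2}$ bounds this by $\delta^*_2$. Summing both bounds over the $\le\kappa(n-1)$ pairs gives $\mathbb{P}[\overline{G_\kappa}]\le 2\kappa(n-1)\delta^*_2$.

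It remains to show $G_\kappa$ forces the conclusion; this is read off the elimination rules. First, on $G_\kappa$ no $j\in\mathcal{B}$ is ever marked to \textsl{defeat} any $i\in\mathcal{H}$: such a mark could only arise from the confidence rule (excluded by~(i)) or from $w_j$ hitting $m$ while $i\in R$ (excluded by~(ii), since $i\in R$ forces $w_i<m$). Consequently (a) a $\mathcal{B}$-item enters $Ans$ only at a moment when $\mathcal{H}\cap R=\emptyset$, and (b) an $i\in\mathcal{H}$ can be discarded only when every other item then in $R$ lies in $\mathcal{G}$, in which case $R\subseteq\mathcal{G}$ from then on (as $R$ only shrinks), so no $\mathcal{B}$-item enters $Ans$ afterward, and by~(a) none did before either. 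Now if the final $Ans$ contains a $\mathcal{B}$-item, then by~(b) no $\mathcal{H}$-item was ever discarded, so each of the $\kappa$ items of $\mathcal{H}$ eventually entered $Ans$ (it could not remain in $R$ at termination, as then $\mathcal{H}\cap R\ne\emptyset$ throughout and~(a) would forbid any $\mathcal{B}$-item in $Ans$), and by~(a) all did so before the first $\mathcal{B}$-item entered; hence $|Ans\cap\mathcal{G}|\ge|\mathcal{H}|=\kappa$. Otherwise $Ans$ contains no $\mathcal{B}$-item, so $Ans\subseteq\mathcal{G}$ and $|Ans\cap\mathcal{G}|=k\ge\kappa$. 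Either way the claim holds on $G_\kappa$.

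I expect the structural argument in the last paragraph to be the main obstacle rather than the concentration estimates: one must carefully use that \textsl{defeat} marks are permanent and $R$ is monotonically shrinking, and spell out the boundary behavior of the padding step $S\gets R\cup\{\text{last }l-|R|\text{ removed items}\}$ and of ties at $\theta'_{[\kappa]}$ (both benign on $G_\kappa$ but worth stating). A lesser nuisance is pinning the constants in the Hoeffding bound for event~(ii) so it comes out as $\delta^*_2$ rather than a constant multiple of it.
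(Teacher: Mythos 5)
Your proposal is correct and follows essentially the same route as the paper: the same two per-pair bad events (the confidence rule firing for a bad item against a top-$\kappa$ item, and a bad item reaching the win threshold first), each bounded by $\delta^*_2$ via Lemma~\ref{pi_ij} and Hoeffding, union-bounded over at most $\kappa(n-1)$ pairs, followed by the same case split on whether $Ans$ ends up containing a bad item. Your structural argument in the final paragraph is in fact spelled out more carefully than the paper's terse version, and the constant-factor nuisance you flag for event~(ii) disappears if you condition at $w_i+w_j=2\lceil m\rceil$ as the paper does.
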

	\begin{proof} The proof is almost the same as that of Theorem~\ref{TP-TR}. See Section~\ref{section12} for details. \end{proof}
	
	\begin{theorem}[Theoretical performance of PDKS]\label{TP-TopK1}
		With probability at least $1-\delta$, PDKS terminates after $O(\frac{n}{\alpha^2\epsilon^2}\log\frac{n}{\delta})$ $l$-wise queries, and, if $\alpha\leq  \frac{l-1}{4(l+C-1)}$, returns a correct $\epsilon$-top-$k$ subset of $A$.
	\end{theorem}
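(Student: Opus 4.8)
The plan is to treat correctness and sample complexity separately, reusing essentially the ``useful/useless query'' accounting that underlies Theorem~\ref{TP-TR} and Lemma~\ref{kappaCorrectness}.

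For correctness I would apply Lemma~\ref{kappaCorrectness} with $\kappa=k$. This gives that, with probability at least $1-\frac{2k(n-1)\delta}{2k(n-1)+1}=1-2k(n-1)\delta^*_2$, the set $Ans$ returned by PDKS contains at least $k$ items whose preference scores are at least $\theta'_{[k]}-\epsilon=\theta_{[k]}-\epsilon$. I would then observe that each pass of the main loop adds at most one item to $Ans$ (only the current winner $q$ can be inserted; discarded items leave $R$ without entering $Ans$) and that the loop halts precisely when $|Ans|=k$, so on termination $|Ans|=k$ exactly. Hence on the above event every element of $Ans$ is $(\epsilon,k)$-optimal, i.e.\ $Ans\subseteq U_{k,\epsilon}$ and $|Ans|=k$, so $Ans$ is a correct $\epsilon$-top-$k$ subset; this uses the hypothesis $\alpha\le\frac{l-1}{4(l+C-1)}$, under which Lemma~\ref{kappaCorrectness} (via Lemma~\ref{pi_ij}) applies.

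For the query count I would call a query \emph{useful} if its winner is still in $R$ when the result is revealed and \emph{useless} otherwise; useless queries arise only in iterations with $|R|<l$, where $S$ is padded with previously removed items, and such a query alters neither $R$ nor $Ans$. While an item $i$ sits in $R$, every recorded win of $i$ is useful, and the hard rule ``$w_i\ge\frac{1}{4\alpha^2\epsilon^2}\log\frac{1}{\delta^*_2}$'' forces $i$ into $Ans$ --- hence out of $R$ --- the moment that threshold is reached, so each of the at most $n$ items collects at most $\lceil\frac{1}{4\alpha^2\epsilon^2}\log\frac{1}{\delta^*_2}\rceil$ useful wins in its lifetime. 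Summing over items, the number of useful queries is deterministically $O(\frac{n}{\alpha^2\epsilon^2}\log\frac{1}{\delta^*_2})=O(\frac{n}{\alpha^2\epsilon^2}\log\frac{n}{\delta})$, using $\delta^*_2=\frac{\delta}{2k(n-1)+1}$ and $k\le n/2$. For the useless queries I would invoke the RBC condition: in any iteration with $|R|<l$, at least one element of $R$ belongs to $S$, and since $\sup_{i,j}\theta_i/\theta_j\le C$ the winner lands in $R$ with probability bounded below by a quantity depending only on $C$ and $l$; combined with the fact that at most $l-1$ distinct items are ever present in $R$ while $|R|<l$ (so only $O(\frac{l}{\alpha^2\epsilon^2}\log\frac{1}{\delta^*_2})$ useful queries are ever issued during those iterations), a multiplicative Chernoff bound would show that with probability at least $1-\delta^*_2$ the number of useless queries is $O(\frac{l}{\alpha^2\epsilon^2}\log\frac{n}{\delta})$. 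Since $l\le n$, adding the two contributions gives the asserted $O(\frac{n}{\alpha^2\epsilon^2}\log\frac{n}{\delta})$ bound, which in particular certifies termination. A union bound over the two failure events then yields total failure probability at most $2k(n-1)\delta^*_2+\delta^*_2=(2k(n-1)+1)\delta^*_2=\delta$, which is exactly why $\delta^*_2$ was set to $\frac{\delta}{2k(n-1)+1}$.

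I expect the main obstacle to be this last estimate --- showing that the $|R|<l$ iterations add only a lower-order number of queries. It requires using the RBC condition to keep the per-query probability of being useful bounded away from $0$ uniformly, even when $|R|$ has shrunk to a small value; coupling the sequence of (not necessarily contiguous) small-$|R|$ iterations so that a concentration inequality applies cleanly; and verifying that useless queries genuinely leave the state $R$ untouched, so the correctness argument from Lemma~\ref{kappaCorrectness} is unaffected. By contrast, the reduction of correctness to Lemma~\ref{kappaCorrectness} and the deterministic count of useful queries are routine bookkeeping.
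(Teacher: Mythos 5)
Your decomposition into correctness (via Lemma~\ref{kappaCorrectness} with $\kappa=k$) plus a useful/useless query count is exactly the paper's, and your correctness argument and deterministic bound of $O(\frac{n}{\alpha^2\epsilon^2}\log\frac{1}{\delta^*_2})$ on useful queries match Section~\ref{section13}. The gap is in the useless-query estimate. Your own premises are: (i) at most $(l-1)\lceil\frac{1}{4\alpha^2\epsilon^2}\log\frac{1}{\delta^*_2}\rceil$ useful queries occur during the $|R|<l$ iterations, and (ii) each such query is useful with probability bounded below by a quantity ``depending on $C$ and $l$'' --- which in the worst case ($|R|=1$, padded to size $l$) is only $\frac{1}{lC}$. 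A multiplicative Chernoff bound applied to these two facts gives at most $O\bigl(lC\cdot\frac{l}{\alpha^2\epsilon^2}\log\frac{1}{\delta^*_2}\bigr)=O(\frac{l^2}{\alpha^2\epsilon^2}\log\frac{n}{\delta})$ total small-$R$ queries, not the $O(\frac{l}{\alpha^2\epsilon^2}\log\frac{n}{\delta})$ you assert: you cannot simultaneously charge $\Theta(l)$ items' worth of useful wins and pay the $\Theta(l)$ inverse-probability factor without picking up $l^2$. Since the problem allows $l$ up to $n$, $l^2$ is not $O(n)$ and the claimed overall bound $O(\frac{n}{\alpha^2\epsilon^2}\log\frac{n}{\delta})$ does not follow from your argument for large $l$.

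The paper's fix is to charge all useless queries to a \emph{single} item: let $j$ be the last item added to $Ans$. Then $j$ stays in $R$ until termination, and since every iteration with $|R|<l$ queries a set containing all of $R$, every useless query involves $j$. Each query involving $j$ is won by $j$ with probability at least $\frac{1}{lC}$ under the RBC condition, and the algorithm halts once $w_j$ reaches $\lceil\frac{1}{4\alpha^2\epsilon^2}\log\frac{1}{\delta^*_2}\rceil$; so by the Chernoff bound, with probability at least $1-\delta^*_2$ the number of queries involving $j$ --- hence the number of useless queries --- is at most $\lceil 2lC(\frac{1}{4\alpha^2\epsilon^2}\log\frac{1}{\delta^*_2}+\log\frac{1}{\delta^*_2})\rceil=O(\frac{l}{\alpha^2\epsilon^2}\log\frac{n}{\delta})$, with only one factor of $l$. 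With that substitution the rest of your plan (union bound $2k(n-1)\delta^*_2+\delta^*_2=\delta$, and $l\le n$ to absorb the useless count) goes through as you describe.
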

	\begin{proof}
		First fix $\alpha\leq \frac{l-1}{4(l+C-1)}$. Letting $\kappa=k$ in Lemma~\ref{kappaCorrectness}, we have that with probability at least $1-2k(n-1)\delta^*_2$, the returned value is correct. 
		As for the sample complexity, we consider all positive $\alpha$ values. The number of \textsl{useful} queries is obviously upper bounded by $O(\frac{n}{\alpha^2\epsilon^2}\log\frac{1}{\delta^*_2}) = O(\frac{n}{\alpha^2\epsilon^2}\log\frac{n}{\delta})$ (recall that $\delta^*_2=\Theta({\delta}/{n})$). By Chernoff Bound and some computation, we can prove that the number of \textsl{useless} queries is at most $O(\frac{l}{\alpha^2\epsilon^2}\log\frac{n}{\delta})$ with probability $1-\delta^*_2$. The desired result follows. See Section~\ref{section13} for details.
	\end{proof}
	
	Based on PDKS, we design TNKS. TNKS runs like a tournament. At each round $r$ (i.e., the $r$-th repetition of lines 2 to 7), it divides the remaining items $R$ into groups of size $m\geq 2k$. Then, items within each group compete and only $k$ of them survive. After each round, only half (with at most $k$ more) of the remaining items will survive. Precisely speaking,
	\begin{equation}
		\label{T_rSize} |T_r| \leq \left\lceil {|T_{r-1}|}/{m} \right\rceil k \leq \left\lceil {\left\lceil \frac{n}{k} \right\rceil}{2^{-r}} \right\rceil k, 
	\end{equation}
	where $T_r$ is the set of remaining items after round $r$. Rounds will be repeated until only $k$ items remain. Thus, by at most $\left\lceil \log_2{n} \right\rceil$ rounds, TNKS terminates.
	
	\begin{algorithm}[bht]
		\caption{TournamentKSelection$(A,k, \delta,\epsilon, \alpha)$}\label{AL-TopK2}
		\hspace*{\algorithmicindent} \textbf{Input:} $A$ the $n$-sized set to be ranked, $k$ the number of top items to be selected, $\delta$ a desired error probability bound, $\epsilon$ the error tolerance, and $\alpha$ a parameter balancing success probability and sample complexity.\\
		\hspace*{\algorithmicindent} \textbf{Output:} An $\epsilon$-top-$k$ subset correct w.p. $\geq 1-\delta$.\\
		\hspace*{\algorithmicindent} \textbf{Initialize:} 
		$m \gets \min \{n, \max\{2k, k+l-1\}\}$;
		$T_0 \gets A$;
		$r \gets 0$;
		$\delta_r \gets \frac{6 \delta}{r^2 \pi^2}$, and  
		$\epsilon_r \gets \frac{\epsilon}{4}(\frac{4}{5})^r$ for $r\in \mathbbm{Z}^+$;
		\begin{algorithmic}[1]
			\Repeat
			\State  $r\gets r+1$;  $T_r\gets \emptyset$; $R\gets T_{r-1}$; 
			\Repeat 
			\If{$|R|\geq m$} $B$ $\gets$ \{$m$ random items in $R$\};
			\Else \  $B\gets R\ \cup$ \{$(m-|R|)$ random items\};
			\EndIf
			\State $D \gets$ PDKS$(B,k,\delta_r,\epsilon_r, \alpha)$
			\State $T_r \gets T_r \cup D$, $R\gets R-B$
			\Until{$R=\emptyset$}
			\Until{$|T_r|=k$} \\
			\Return $T_r$
		\end{algorithmic}
	\end{algorithm}
	
	By using Lemma~\ref{kappaCorrectness}, we can prove the theoretical performance of TNKS, which is stated in Theorem~\ref{TP-TopK2}. 
	
	\begin{theorem}[Theoretical performance of TNKS]\label{TP-TopK2}
		With probability at least $1-\delta$, TNKS terminates after $O(\frac{n}{\alpha^2\epsilon^2}\log\frac{k+l}{\delta})$ $l$-wise queries, and, if $\alpha\leq \frac{l-1}{4(l+C-1)}$, returns a correct $\epsilon$-top-$k$ subset of $A$.
	\end{theorem}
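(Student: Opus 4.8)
The plan is a ``halving''-style induction over the deterministic sequence of rounds of Algorithm~\ref{AL-TopK2}, feeding the per-group guarantee of PDKS (Lemma~\ref{kappaCorrectness}, Theorem~\ref{TP-TopK1}) into a recursion for the $k$-th preference score of the surviving set $T_r$. Write $\theta^{(r)}_{[j]}$ for the $j$-th largest preference score among $T_r$, so $\theta^{(0)}_{[j]}=\theta_{[j]}$. Since $|T_r|=k\lceil|T_{r-1}|/m\rceil$ with $|T_0|=n$ and $m\ge2k$, the set sizes $|T_r|$, the group counts $g_r:=\lceil|T_{r-1}|/m\rceil$, and the round count are all deterministic; by~(\ref{T_rSize}), $|T_r|=O(n2^{-r})$ while $|T_r|>k$, so the process terminates with $r_{\max}=\log_2(n/m)+O(1)$, and $m=\min\{n,\max\{2k,k+l-1\}\}=\Theta(\min\{n,k+l\})$. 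Auxiliary ``padding'' items used when $|R|<m$ may be taken to have preference score $0$: such items never win a query, hence are never returned by PDKS and never affect the scores or counts below.

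The correctness core is that one round degrades the $k$-th score by at most $\epsilon_r$: if in round $r$ every call $\mathrm{PDKS}(B,k,\delta_r,\epsilon_r,\alpha)$ returns at least $\kappa_B$ items of score $\ge(\text{the }\kappa_B\text{-th largest score in }B)-\epsilon_r$, where $\kappa_B$ is the number of the $k$ currently-best items of $T_{r-1}$ that landed in $B$, then $\theta^{(r)}_{[k]}\ge\theta^{(r-1)}_{[k]}-\epsilon_r$. Indeed, pick $a_1,\dots,a_k$ realizing the $k$ largest scores of $T_{r-1}$; each lies in exactly one group, so $\sum_B\kappa_B=k$, and since every $a_i$ has score $\ge\theta^{(r-1)}_{[k]}$, the $\kappa_B$-th largest score of $B$ is $\ge\theta^{(r-1)}_{[k]}$, so PDKS feeds $\ge\kappa_B$ distinct items of score $\ge\theta^{(r-1)}_{[k]}-\epsilon_r$ into $T_r$; summing over $B$ yields $k$ such items. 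Iterating and using $\sum_{r\ge1}\epsilon_r=\tfrac{\epsilon}{4}\sum_{r\ge1}(4/5)^r=\epsilon$ gives $\theta^{(r_{\max})}_{[k]}\ge\theta_{[k]}-\epsilon$, so (as $|T_{r_{\max}}|=k$) a correct $\epsilon$-top-$k$ subset is returned. For the probability: conditioning on the history through round $r-1$ and on the random partition into groups (which fixes each $\kappa_B$), Lemma~\ref{kappaCorrectness} bounds the probability that the size-$m$ call on $B$ fails its $\kappa_B$-statement by $\tfrac{2\kappa_B(m-1)\delta_r}{2k(m-1)+1}$; summing over the $g_r$ groups and using $\sum_B\kappa_B=k$, round $r$ fails the hypothesis above with probability below $\delta_r$, \emph{independent of $g_r$}, so a union bound over rounds gives total failure $<\sum_{r\ge1}\delta_r=\tfrac{6\delta}{\pi^2}\sum_{r\ge1}r^{-2}=\delta$.

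For the sample complexity, Theorem~\ref{TP-TopK1} gives $O\!\big(\tfrac{m}{\alpha^2\epsilon_r^2}\log\tfrac{m}{\delta_r}\big)$ queries per call (the ``useful'' part deterministic; the ``useless'' queries, which occur only in the brief final phases, dominated since $l\le m$). With $\epsilon_r^{-2}=\Theta(\epsilon^{-2}(25/16)^r)$ and $g_rm\le|T_{r-1}|+m=O(n2^{-r}+m)$, round $r$ costs $O\!\big(\tfrac{n2^{-r}+m}{\alpha^2\epsilon^2}(25/16)^r\log\tfrac{m}{\delta_r}\big)$. The $n2^{-r}$ part is summable since $2^{-r}(25/16)^r=(25/32)^r$ and $25/32<1$ --- exactly why the ratio $4/5>1/\sqrt2$ was chosen for $\epsilon_r$ --- contributing $O(\tfrac{n}{\alpha^2\epsilon^2}\log\tfrac{m}{\delta})$ overall; the $m$ part is wasteful only in the final round, where $g_{r_{\max}}=1$ and $(25/16)^{r_{\max}}=O((n/m)^c)$ with $c=\log_2(25/16)<1$, so it costs $O\!\big(\tfrac{m^{1-c}n^c}{\alpha^2\epsilon^2}\log\tfrac{m}{\delta}\big)=O(\tfrac{n}{\alpha^2\epsilon^2}\log\tfrac{m}{\delta})$ since $m\le n$. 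As $\log\tfrac{m}{\delta}=O(\log\tfrac{k+l}{\delta})$, this sums to $O(\tfrac{n}{\alpha^2\epsilon^2}\log\tfrac{k+l}{\delta})$; the condition $\alpha\le\tfrac{l-1}{4(l+C-1)}$ is inherited verbatim from each PDKS call.

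The step I expect to be the main obstacle is driving the per-round correctness-failure down to $\delta_r$ rather than $g_r\delta_r$: this is what makes the $\kappa$-dependent form of Lemma~\ref{kappaCorrectness} essential (a plain ``each group's PDKS call is $\delta_r$-correct'' would lose a factor $g_r$) and relies on the cancellation $\sum_B\kappa_B=k$ against the $2k(m-1)+1$ inside PDKS's confidence parameter. The analogous bookkeeping --- aggregating the useless-query counts and the per-call running-time tails over the $O(n/m)$ PDKS invocations while keeping the total failure probability at $\delta$ --- is the other place where care is needed.
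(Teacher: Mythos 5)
Your correctness argument is essentially the paper's own proof: the same round-by-round induction (your $\theta^{(r)}_{[k]}\geq\theta^{(r-1)}_{[k]}-\epsilon_r$ is the paper's event $|T_s\cap U_{k,\sum_{r\le s}\epsilon_r}|\geq k$ in different notation), the same use of the $\kappa$-dependent form of Lemma~\ref{kappaCorrectness} with the cancellation $\sum_B\kappa_B=k$ against the $2k(m-1)+1$ in $\delta^*_2$ to get per-round failure $\delta_r$ rather than $g_r\delta_r$, and the same geometric summation with ratio $(4/5)^2\cdot\tfrac12=25/32<1$. That part is complete and correct, and your handling of the residual $m$-term in the last rounds is if anything more explicit than the paper's.

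The one genuine gap is the high-probability control of the \textsl{useless} queries, which you flag at the end but do not close. Invoking Theorem~\ref{TP-TopK1} per call gives a useless-query bound that holds only with probability $1-\delta'$ \emph{per call}, with $\delta'=\delta_r/(2k(m-1)+1)$; union-bounding over the $g_r=\lceil|T_{r-1}|/m\rceil$ calls of round $r$ costs $g_r\delta'$, and already for $r=1$ this is of order $\tfrac{n}{km^2}\delta_1$, which exceeds $\delta$ whenever $n\gg km^2$ --- unlike the correctness events, there is no $\sum_B\kappa_B=k$ cancellation available here. The paper avoids this by not treating the calls separately: it observes that every useless query of round $s$ involves one of the at most $\lceil|T_{s-1}|/m\rceil$ ``last'' items of that round (one per call), each of which wins any query it appears in with probability at least $\tfrac{1}{lC}$, and applies a \emph{single} Chernoff bound to conclude that after $O\bigl(\tfrac{|T_{s-1}|}{\alpha^2\epsilon_s^2}\log\tfrac{m}{\delta_s}\bigr)$ queries involving some last item, all last items have won their quota --- failure probability $\delta'$ for the whole round, not per call. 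You need this (or an equivalent aggregation) to legitimately charge the useless queries to the stated bound while keeping the total failure probability at $\delta$.
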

	\begin{proof} 
		We can prove that when $|T_{s-1}\cap U_{k,\sum_{r=1}^{s-1}\epsilon_r}|\geq k$ ($U_{k,\sum_{r=1}^{s-1}\epsilon_r}$ is defined in (\ref{Ukepsilon})), at round $s$, with probability at least $1-\delta_s$, TNKS takes at most $O(\frac{|T_{s-1}|}{\epsilon^2}\log\frac{m}{\delta_s})$ queries, and $|T_{s}\cap U_{k,\sum_{r=1}^{s}\epsilon_r}|\geq k$. The desired result then follows from the choices of $\delta_r$ and $\epsilon_r$ in TNKS. See Section~\ref{section14} for details. 
	\end{proof}
	
	Fix $\alpha = \Omega(1)$, the sample complexity is $O(\frac{n}{\epsilon^2}\log\frac{k+l}{\delta})$.
	
	Clearly, under the PL model (i.e., $l=2$), our algorithm has order-optimal sample complexity in the worst case. When $l>2$, if $l=O(poly(k))$, our algorithm is still order-optimal in the worst case. When $l$ increases, the theoretical upper bound of TNKS' sample complexity increases. However, it can be seen later from the numerical results in Figure~\ref{fig:lComparison} that as $l$ increases, the actual number of queries decreases. This can be explained as follows: The required $\alpha$ value ($\frac{l-1}{4(C+l-1)}$) decreases as $l$ increases, and the sample complexity upper bound scales as $O(\alpha^{-2})$.
	
\section{Numerical Results}\label{sec:NR}
	In this section, we compare our algorithms with the state-of-the-art by running simulations on synthetic data as well as real-world data. 
	
	\subsection{Synthetic Data}
	In this subsection, we perform comparisons using synthetic data. In the datasets, all items' preference scores are independently generated uniformly at random in $[1/C,1]$, and then rescaled to let the maximal preference score be $1$. All algorithms are tested on the same datasets for fair comparisons. Every point in the figures is averaged over 100 trails. In this part we fix $n=10$, $\epsilon=0.05$, $\delta=0.05$, and $C=10$, and vary the other parameters to perform the comparisons.
	
	We first compare TNKS (Algorithm~\ref{AL-TopK2}) for the top-$k$ ranking problems with the state-of-the-art algorithms including Spectral MLE \citep{SpectralMLE2015}, AlgPairwise (and AlgMultiwise, its listwise version) \citep{Chen2018}, and Halving \citep{Halving2010}. Spectral MLE is with $O(n\log{n})$ sample complexity in the pairwise case. AlgMultiwise's sample complexity is $O(n\log^{14}{n})$ under default parameters. Halving is an example of FEMAB algorithms, with sample complexity $O(\frac{n}{\epsilon^2}\log\frac{k}{\delta})$ for $l=2$.
	
	In the implementations of these algorithms, we vary the parameters (e.g., $\alpha$ of TNKS) to balance the trade-off between success rate and sample complexity. For Spectral MLE, we fix $L$ and vary input parameter $p$. For AlgMultiwise and AlgPairwise, we vary $\kappa$. For Halving, we vary input parameter $\delta$, the desired error probability bound. 
	
	\begin{figure}[bht]
		\begin{subfigure}[b]{0.23\textwidth}
			\includegraphics[scale=0.5]{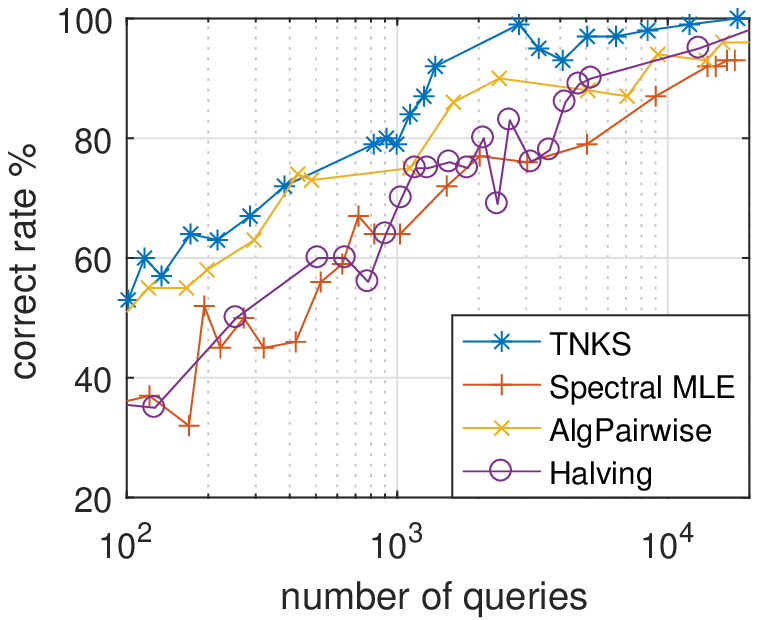}
			\caption{$l=2,k=1$.}
		\end{subfigure}
		\begin{subfigure}[b]{0.23\textwidth}
			\includegraphics[scale=0.5]{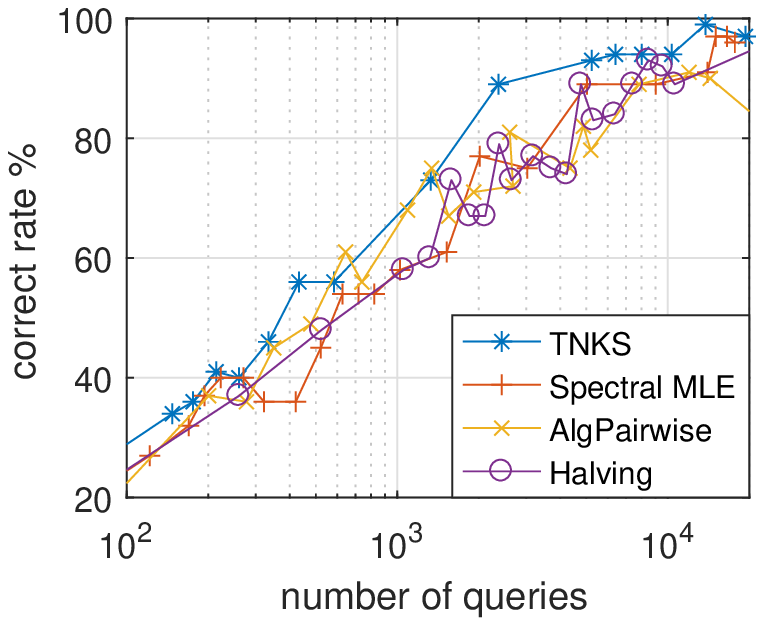}
			\caption{$l=2,k=2$.}
		\end{subfigure}
		\begin{subfigure}[b]{0.23\textwidth}
			\includegraphics[scale=0.5]{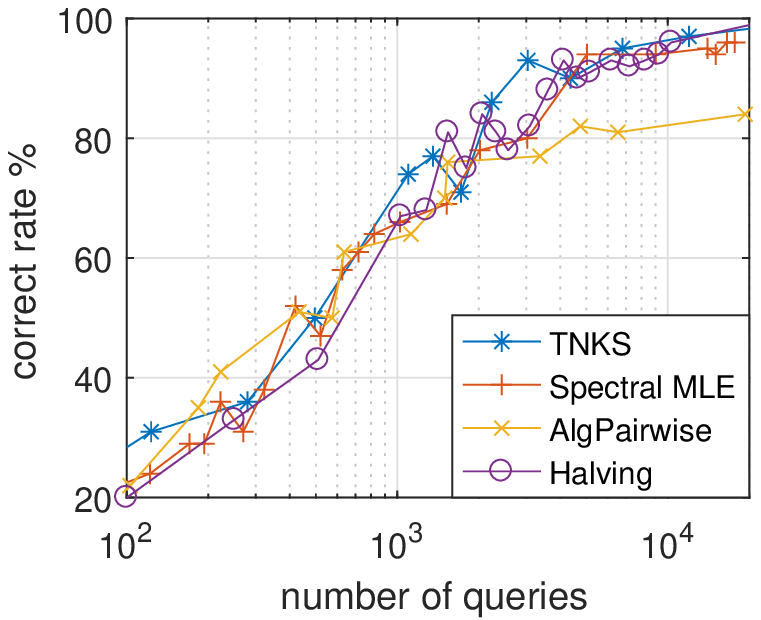}
			\caption{$l=2,k=5$.}
		\end{subfigure}
		\begin{subfigure}[b]{0.23\textwidth}
			\includegraphics[scale=0.5]{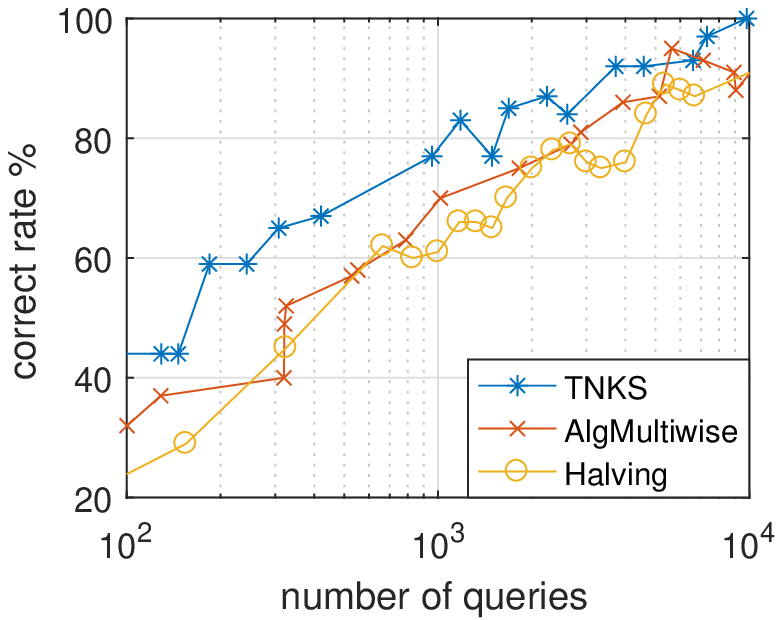}
			\caption{$l=3,k=2$.}
		\end{subfigure}
		\begin{subfigure}[b]{0.23\textwidth}
			\includegraphics[scale=0.5]{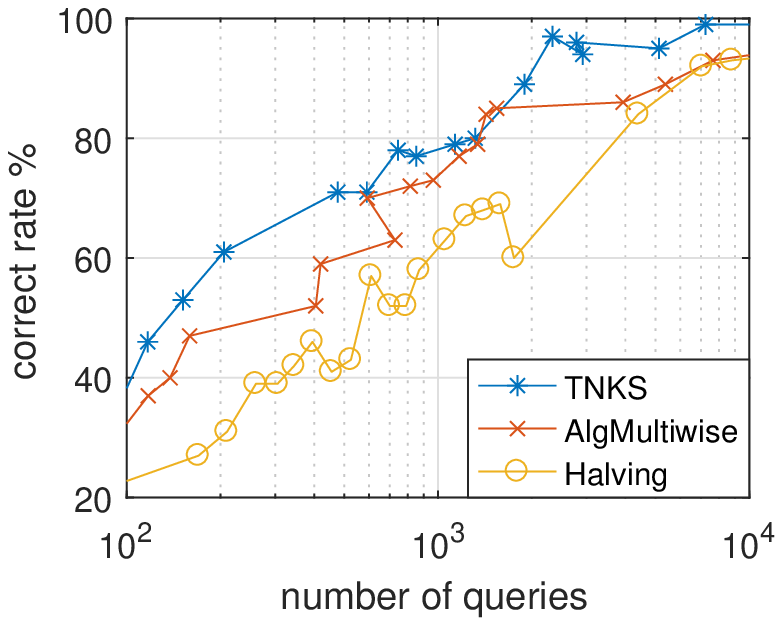}
			\caption{$l=5,k=2$.}
		\end{subfigure}
		\begin{subfigure}[b]{0.23\textwidth}
			\includegraphics[scale=0.5]{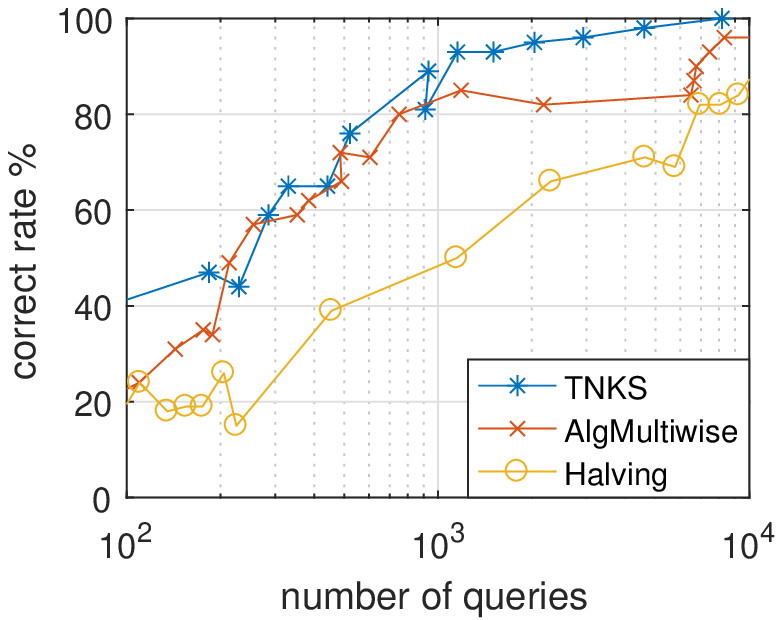}
			\caption{$l=10,k=2$.}
		\end{subfigure}
		\caption{Comparisons of top-$k$ ranking algorithms.}\label{fig:TKSyn}
	\end{figure}

	We begin with the pairwise case (i.e., $l=2$). The results are shown in Figure~\ref{fig:TKSyn} (a)-(c). We can see that when $k=1$ or $k=2$, TNKS outperforms other algorithms. When $k=5$, the performances of all algorithms are close. This is because the sample complexity of TNKS is $O(\frac{n}{\epsilon^2}\log{\frac{k}{\delta}})$, while that of Spectral MLE and AlgPairwise is $O(n \cdot poly(\log{n}))$, so TNKS performs better when $k$ is small. The results indicate that the advantage of TNKS is greater when $k$ is small, consistent with our theoretical results.
	
	Next, we compare these algorithms in the listwise case (i.e, $l>2$). The results are illustrated in Figure~\ref{fig:TKSyn} (d)-(f). Spectral MLE works only for pairwise ranking, which is not comparable in this part. As we can see, TNKS' performance is better than AlgMultiwise overall, consistent with our theoretical results. Also, TNKS is better than Halving. Further, it can be seen that when $l$ increases, the gap between TNKS and Halving increases. This indicates that the approach of transforming listwise ranking problems to the FEMAB ones does not work well for large $l$.
	
	Secondly, we compare PDTR (Algorithm~\ref{AL-TR}) with total ranking algorithms including PLPAC-AMPR \citep{OnlineRankingElicitation2015} and Borda Ranking \citep{MaxingAndRanking2017}. PLPAC-AMPR only works in the pairwise case, so it is not comparable in the listwise case. In the implementations, we vary the $\delta$ value of PLPAC-AMPR and Borda Ranking to balance the trade-off of sample complexity and success rate. The results are illustrated in Figure~\ref{fig:TRSyn}.
	
	\begin{figure}[bht]
		\begin{subfigure}[b]{0.23\textwidth}
			\includegraphics[scale=0.5]{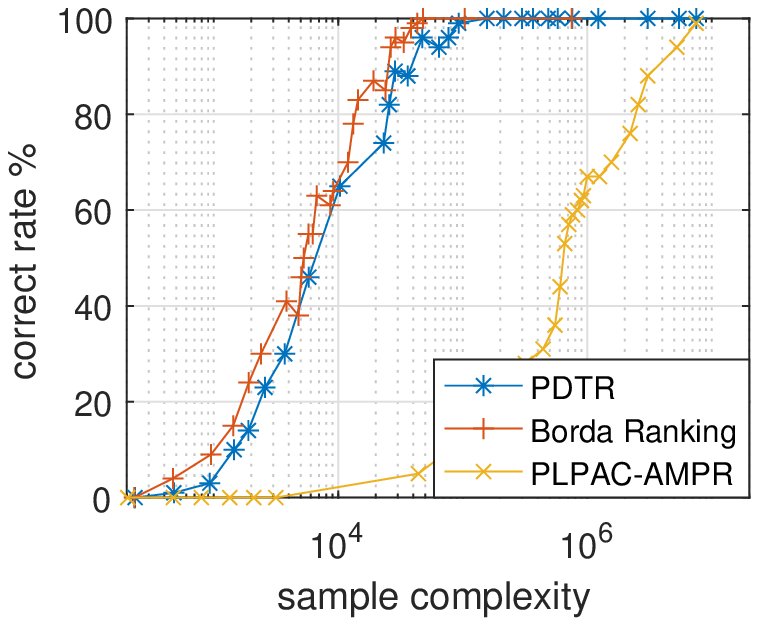}
			\caption{$l=2$.}
		\end{subfigure}
		\begin{subfigure}[b]{0.23\textwidth}
			\includegraphics[scale=0.5]{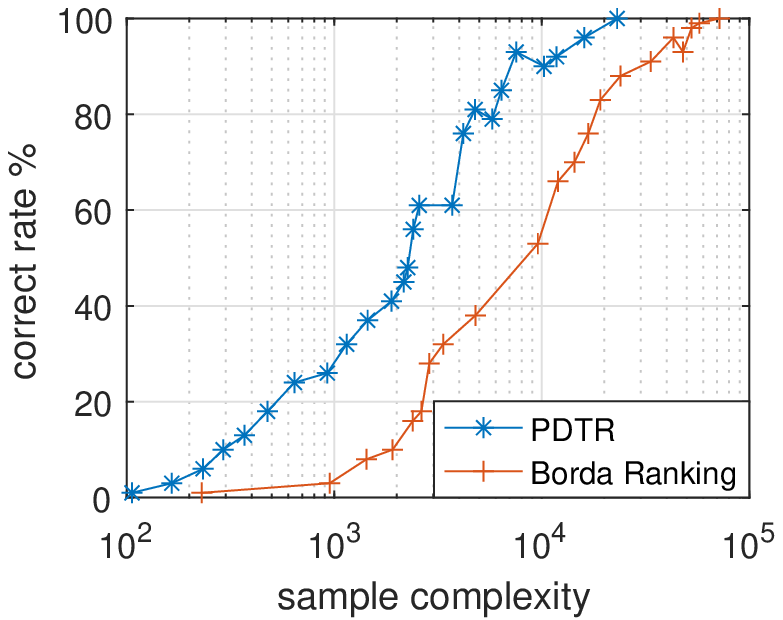}
			\caption{$l=5$.}
		\end{subfigure}
		\caption{Comparisons of total ranking algorithms.}\label{fig:TRSyn}
	\end{figure}
	
	According to the figures, the performance of PLPAC-AMPR is much worse than PDTR, which is consistent with the theoretical results that PLPAC-AMPR's sample complexity is $O(n\log^2{n})$. We can also see that when $l=2$, Borda Ranking is slightly better than PDTR. An explanation is that when $l=2$, Borda Ranking is of sample complexity $O(\frac{n}{\epsilon^2}\log\frac{n}{\delta})$, the same as PDTR, and may have a smaller constant factor. However, when $l=5$, PDTR outperforms Borda Ranking significantly. This again indicates that traditional pairwise ranking algorithms do not work well in the listwise cases. 
	
	Thirdly, we test the performance of TNKS and PDTR under different $l$ values. We show that although their upper bounds of sample complexity increases as $l$ increases, their actual performances are better for larger $l$ values. One possible explanation is that as $l$ increases, the required $\alpha$ values is larger, and the sample complexity upper bound scales as $O(\alpha^{-2})$. The results are illustrated in Figure~\ref{fig:lComparison}. 
	
	\begin{figure}[bht]
		\begin{subfigure}[b]{0.23\textwidth}
			\includegraphics[scale=0.5]{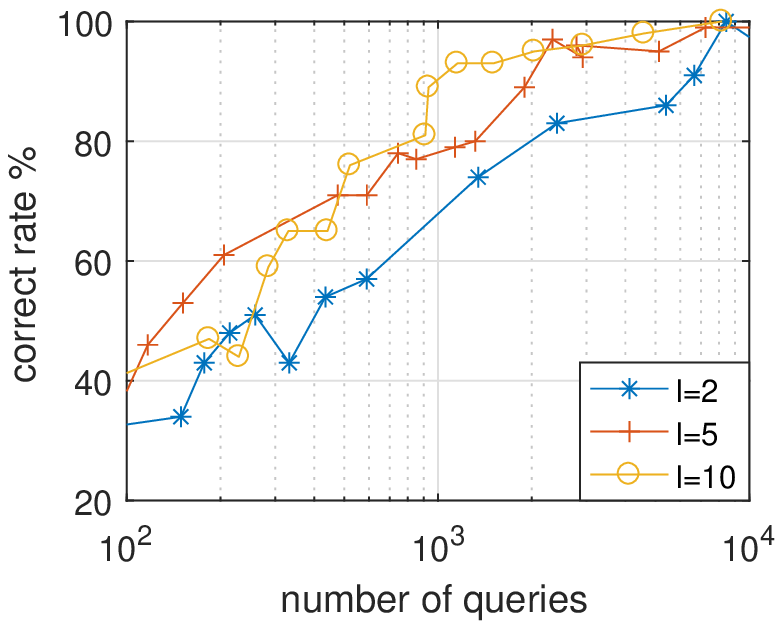}
			\caption{TNKS, $k=2$.}
		\end{subfigure}
		\begin{subfigure}[b]{0.23\textwidth}
			\includegraphics[scale=0.5]{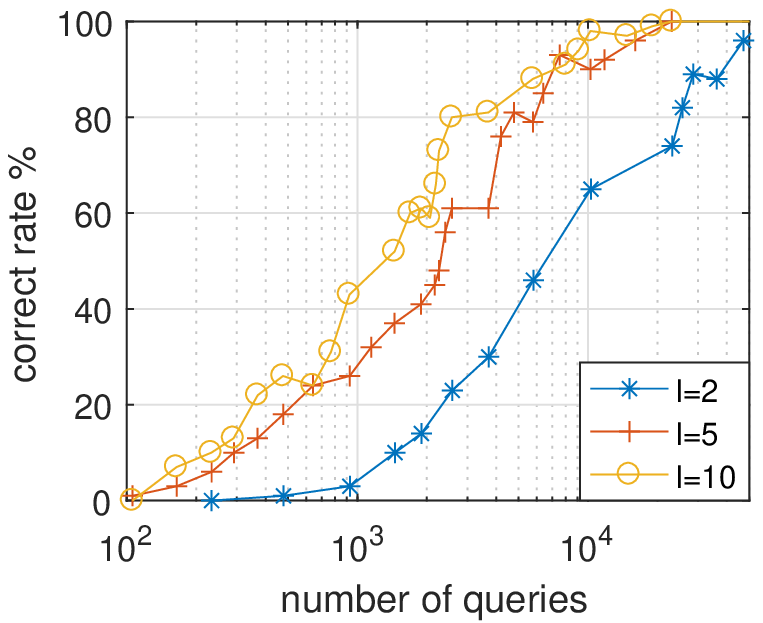}
			\caption{PDTR.}
		\end{subfigure}
		\caption{Performance of the proposed algorithms under different $l$ values.}\label{fig:lComparison}
	\end{figure}
	
	\subsection{Real-World Data}
	In this subsection, we compare the algorithms on real-word data. We use datasets "ED-00004-00000189", "ED-00004-00000190", and "ED-00004-00000198" from PrefLib \citep{PrefLib2007} to conduct real-world data experiments. Each dataset contains several hundreds entries. Each entry provides a strict order of four movies annotated by a user. Here we present four entries of "ED-00004-00000189" to help the readers understand the datasets: "90,1,3,2,4", "45,1,2,3,4", "35,1,3,4,2", and "29,2,3,4,1". The entry "90,1,3,2,4" means that there are 90 users who prefer movie 1 the best, movie 2 the second, and movie 4 the last.
	
	In the implementations of algorithms, we generate the query results by the empirical marginals, that is $\mathbb{P}(i|S)=$ the empirical frequency that $i$ is more preferred than all other items of $S$ in the dataset. We use the corresponding pairwise preference data to compute the preference scores with highest likelihood ratio by MM method \citep{MMMethod2004}, and use them to generate the correct ranking. All algorithms are tested on these three datasets. For each dataset, we perform 100 trials for each point, and then take average over the three datasets. Here, we take parameters $\epsilon=0.05$ and $\delta=0.05$.
	
	\begin{figure}[bht]
		\begin{subfigure}[b]{0.23\textwidth}
			\includegraphics[scale=0.5]{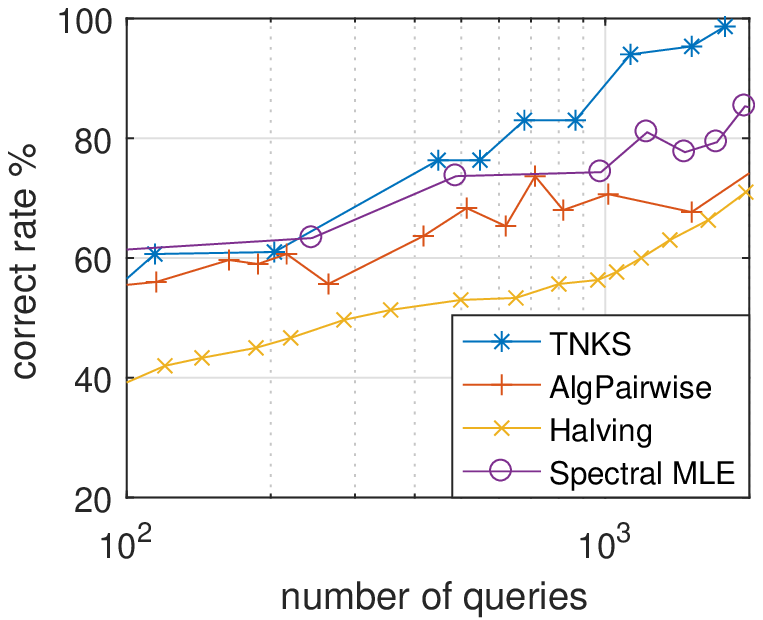}
			\caption{$l=2$.}
		\end{subfigure}
		\begin{subfigure}[b]{0.23\textwidth}
			\includegraphics[scale=0.5]{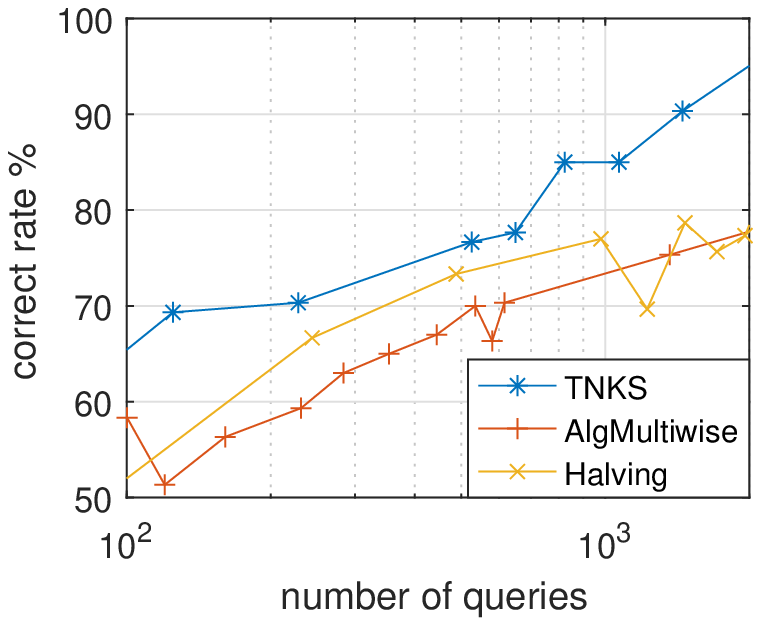}
			\caption{$l=4$.}
		\end{subfigure}
		\caption{Comparisons of the top-$k$ ranking algorithms on real-world data.}\label{fig:TKRW}
	\end{figure}
	
	First we compare the top-$k$ ranking algorithms and the results are shown in Figure~\ref{fig:TKRW}. We can see that TNKS still outperforms other algorithms and the gaps are larger when $l=4$. The results are consistent to our theoretical and numerical results on synthetic data.
	
	Next, we compare the total ranking algorithms, and the results are shown in Figure~\ref{fig:TRRW}. We do not test PLPAC-AMPR, since \citet{OnlineRankingElicitation2015} showed that it does not fit well for some real-word data, especially for those whose empirical marginals are far from the PL model. We ran PLPAC-AMPR on a computer with 8 Intel Core i7-6700 CPUs, but it did not return within a reasonable amount of time. 
	
	\begin{figure}[bht]
		\begin{subfigure}[b]{0.23\textwidth}
			\includegraphics[scale=0.5]{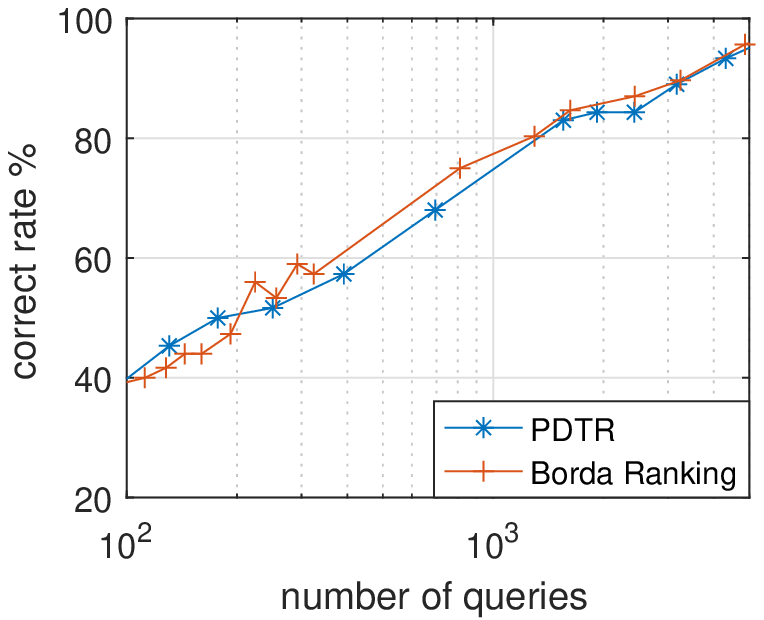}
			\caption{$l=2$.}
		\end{subfigure}
		\begin{subfigure}[b]{0.23\textwidth}
			\includegraphics[scale=0.5]{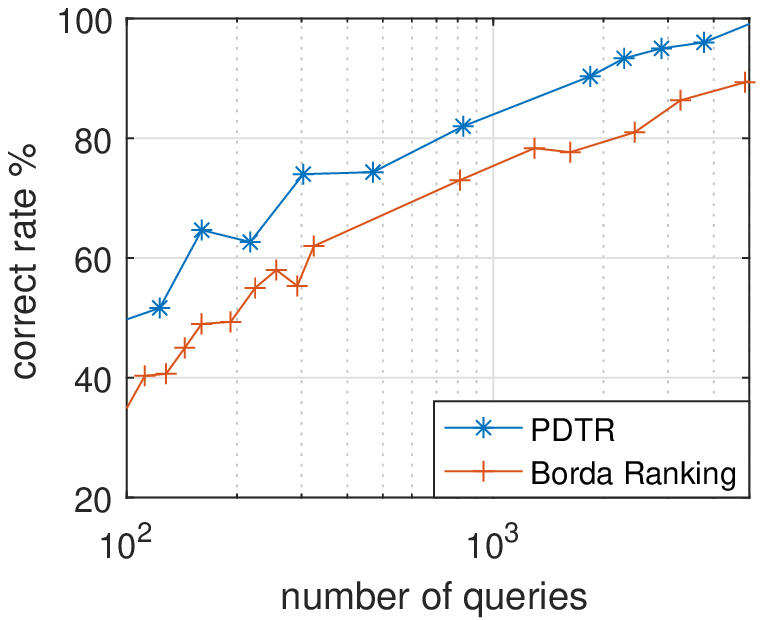}
			\caption{$l=4$.}
		\end{subfigure}
		\caption{Comparisons of the total ranking algorithms on real-world data.}\label{fig:TRRW}
	\end{figure}
	
	According to the results, when $l=2$, the performances of these two algorithms are close. However, when $l=4$, PDTR clearly outperforms Borda Ranking. The results on real-world data are consistent with our theoretical and numerical results on synthetic data.
	
	\section{Conclusion}\label{sec:Conclusion}
	In this paper, we studied the PAC top-$k$ ranking problem and the PAC total ranking problem, both under the MNL model. For the first problem, we derived a lower bound on the sample complexity, and proposed an algorithm that is optimal up to a $\log(k+l)/\log{k}$ factor. When $l=2$ (i.e. pairwise) or $l=O(poly(n))$, our result is order-optimal. For the second problem, we derived a tight lower bound, and proposed an algorithm that matches the lower bound. Numerical experiments on synthetic data and real-world data confirmed the improvement for both pairwise and listwise ranking.
	
\bibliography{MNL_Reference}
\bibliographystyle{achemso}

\section{Proof of Theorem~\ref{LB-k-IS}}\label{section9}
	\begin{proof}
		In this proof, we invoke Theorem~8 of \citep{LowerBound2012}, which proves that in the worst case, to find $k$ $(\epsilon,k)$-optimal bandit arms out of $n$ arms, any algorithm needs to take $\Omega(\frac{n}{\epsilon^2}\log\frac{k}{\delta})$ pulls in expectation, where an arm is said to be $(\epsilon,k)$-optimal if its expected reward plus $\epsilon$ is no less than the $k$-th largest expected reward of all the arms. The \textsl{hard instance} used in the proof of Theorem~8 in \citep{LowerBound2012} consists of $n$ arms with Bernoulli rewards. We denote this \textsl{hard instance} as $\mathcal{I}_1$ in this paper.
		
		Suppose that there is an algorithm $\mathcal{A}$ that can solve all instances of the $k$-IS problem with $o(\frac{n}{\epsilon^2}\log\frac{k}{\delta})$ $l$-wise queries. We will show that $\mathcal{A}$ can solve $\mathcal{I}_1$ mentioned above with $o(\frac{n}{\epsilon^2}\log\frac{k}{\delta})$ samples and leads to a contradiction.
		
		In $\mathcal{I}_1$, for any $l$ arms denoted by $a_1,a_2,...,a_l$, let $\mu_i$ be the expected reward of arm $a_i$. For each sample of arm $a_i$, the reward  is an independent instance of the Bernoulli($\mu_i$) distribution. From the proof of Theorem 8 of \cite{LowerBound2012}, there exists a constant $C$ such that $\sup_{i,j}\frac{\mu_i}{\mu_j}\leq C$. Obviously, as $\mu_i\leq 1$ for all $i$, we have $\mu_i\geq 1/C$ for all $i$.
		
		Now define the following procedure $\mathcal{P}$: randomly choose an arm from $\{a_1,a_2,...a_l\}$, and sample it; if the sampling result is 1, then arm $a_i$ is returned; otherwise, we repeat choosing arms until one sample result is 1. In other words, $\mathcal{P}$ continuously chooses arms to sample until an one occurs, and return the corresponding arm.
		
		Now we prove that in $\mathcal{P}$, arm $a_i$ is returned with probability exactly $\frac{\mu_i}{\sum_{i=1}^l{\mu_i}}$ for all $i$, and the procedure $\mathcal{P}$ returns after at most $C$ samples in expectation. 
		
		Define $X$ is the returned value, and $T$ is the number of samples pulled before termination. 
		
		First, it can be proved easily that for $n\in\mathbbm{N}$, 
		\begin{equation} 
		Pr\{T > n\} = \left(1-\frac{1}{l}\sum_{i=1}^l{\mu_i}\right)^n. 
		\end{equation} 
		Since $\mu_i\geq 1/C$ for all $i$, it follows that 
		\begin{equation} 
		\mathbbm{E}T = \sum_{n=0}^\infty\left(1-\frac{1}{l}\sum_{i=1}^l{\mu_i}\right)^n = \frac{l}{\sum_{i=1}^l{\mu_i}} \leq C. 
		\end{equation}
		
		Secondly, we have \begin{align}
		& Pr\{X=a_i\} \nonumber \\ 
		= &\sum_{t=1}^\infty{Pr\{T=t,X=a_i\}}\nonumber \\
		= &\sum_{t=1}^\infty{Pr\{T>t-1\}Pr\{T=t,X=a_i\mid T>t-1\}}\nonumber \\
		=&\sum_{t=1}^\infty{\left[\left(1-\frac{1}{l}\sum_{i=1}^l{\mu_i}\right)^{t-1}\cdot \frac{\mu_i}{l}\right]}\nonumber \\
		= & \frac{\mu_i}{\sum_{i=1}^l{\mu_i}},
		\end{align}
		and the two desired properties of $\mathcal{P}$ have been proved.
		
		The marginal probability of $\mathcal{P}$ is exactly the same as the MNL model. Thus, if we replace the querying operation in $\mathcal{A}$ by $\mathcal{P}$, we can construct a new algorithm that solves the \textsl{hard instance} $\mathcal{I}_1$ by at most $o(\frac{n}{\epsilon^2}\log\frac{k}{\delta})$ samples, and leads to a contradiction.

	\end{proof}

\section{Proof of Lemma~\ref{pi_ij}}\label{section10}
	\begin{proof}
		Fix $\alpha\leq \frac{l-1}{4(l+C-1)}$. Let $S^t$ be the $t$-th set to be queried in the algorithm, and $Q^t$ be its query result. Since one query has only one result, we have that
		\begin{equation} 
			Pr\{Q^t = i \lor Q^t = j\} = Pr\{Q^t = i\} + Pr\{Q^t = j\}.
		\end{equation} 
		Let $p=\frac{1}{2}+\alpha(\theta_i-\theta_j)$. When $S$ does not contain $i$ or $j$ and $S^t=S$, $Q^t$ is with probability $0$ to be $i$ or $j$. Let $S'=S$, and obviously we have 
		\begin{align}\label{pijgs}
			{Pr\{Q^t = i \mid S^t = S\}}\geq & \nonumber\\
			p (Pr\{Q^t = i \mid S^t = S\} + Pr\{Q^t = j & \mid S^t = S'\}).
		\end{align}
		
		For any $l$-sized set $S$ containing $i$ but not $j$, we can find a unique set $S'$ such that $S'=S^t-\{i\}+\{j\}$. Let $\beta = \sum_{a\in S-\{i\}}\theta_a$. Here we note that as $1/C\leq \theta_a \leq 1$ for all items $a$, $\beta\geq \frac{l-1}{C}$. By the definition of the RBC condition, we have 
		\begin{align}
			&\frac{Pr\left\{Q^t = i \mid S^t = S\right\}}{Pr\left\{Q^t = i \mid S^t = S\right\} + Pr\left\{Q^t = j \mid S^t = S'\right\}}\nonumber \\
			= &\frac{\theta_i / (\theta_i + \beta)}{\theta_i / (\theta_i + \beta) + \theta_j / (\theta_j + \beta)} \nonumber\\
			= &\frac{\theta_i(\theta_j + \beta)}{\theta_i(\theta_j + \beta) + \theta_j(\theta_i + \beta)}\nonumber\\
			= &\frac{\frac{\theta_i \theta_j}{\beta} + \theta_i}{\frac{2\theta_i \theta_j}{\beta} + \theta_i + \theta_j} \nonumber\\
			= &\frac{1}{2} + \frac{\frac{1}{2}(\theta_i - \theta_j)}{\frac{2\theta_i \theta_j}{\beta} + \theta_i + \theta_j} \nonumber\\
			\stackrel{(a)}{\geq} &\frac{1}{2} + \frac{\frac{1}{2}(\theta_i - \theta_j)}{ \frac{2}{(l-1)\frac{1}{C}} + 2} \nonumber\\
			= &\frac{1}{2} + \frac{(l-1)(\theta_i-\theta_j)}{4(l-1+C)}\nonumber\\
			\stackrel{(b)}{\geq} &\frac{1}{2} + \alpha (\theta_i-\theta_j) = p,
		\end{align}
		where (a) follows from $\beta\geq \frac{l-1}{C}$, and (b) follows from the condition $\alpha\leq \frac{l-1}{4(l+C-1)}$.
			
		For any $l$-sized set $S$ containing both $i$ and $j$, let $S' = S$. Since $\theta_i,\theta_j\leq 1$, we have
		\begin{align}
		&\frac{Pr\{Q^t = i \mid S^t = S\}}{Pr\{Q^t = i \mid S^t= S\} + Pr\{Q^t = j \mid S^t= S'\}} \nonumber\\
		= & \frac{\theta_i}{\theta_i + \theta_j} = \frac{1}{2} + \frac{\theta_i-\theta_j}{2(\theta_i+\theta_j)} \geq \frac{1}{2}\!+\! \frac{1}{4}(\theta_i-\theta_j)\geq p,
		\end{align} 
		Thus, (\ref{pijgs}) holds when $i,j\in S$.
		
		For any $l$-sized set $S$ containing $j$ but not $i$, there is a unique set $S'=S-\{j\}+\{i\}$ such that (\ref{pijgs}) holds, as $Pr\{Q^t = i \mid S^t = S\}=Pr\{Q^t = j \mid S^t = S'\}=0$.
		
		We also have
		\begin{gather}
			Pr\{Q^t\!=\!i\}\!=\!\sum_{S}{Pr\{S^t\!=\!S\} Pr\{Q^t = i \mid S^t\!=\!S\}},\!\\  
			Pr\{Q^t\!=\!j\}\!=\!\sum_{S'}{Pr\{S^t\!=\!S'\} Pr\{Q^t\!=\!j\!\mid\!S^t\!=\!S'\}}.\!
		\end{gather} 
		For each $S$ and $S'$, the probability that $S^t = S$ is equal to that of $S^t=S'$, and for every possible chosen set $S$, there is a corresponding set $S'$ to let (\ref{pijgs}) hold, and the map from $S$ to $S'$ is one-to-one. Thus, we can conclude the desired result.
	\end{proof}

\section{Proof of Theorem~\ref{TP-TR}}\label{section11}
	\begin{proof}
		\textbf{Step 1.} Fix $\alpha\leq \frac{l-1}{4(l+C-1)}$. Let $\mathcal{E}^t_{i,j}$ be the event that when $i$ and $j$ are in $R$ and $w_i+w_j=t$, $\frac{w_j}{w_i+w_j}\geq b_{w_i+w_j}$. For any items $i$ and $j$ such that $\theta_i>\theta_j+\epsilon$, by Lemma~\ref{pi_ij} and Hoeffding's inequality, it is easy to show that 
		\begin{align}
			Pr\{\mathcal{E}^t_{i,j}\}\leq & \exp\left\{-2t\left(b_{t}-(\frac{1}{2}-\alpha(\theta_i-\theta_j))\right)^2\right\}\nonumber \\ 
			\leq & \exp\left\{-2t\left(b_{t}-\frac{1}{2} +\alpha\epsilon\right)^2\right\}\leq \frac{6\delta^*_1}{\pi^2 t^2}.\label{EijtB} 
		\end{align} 
		Define $\mathcal{E}^{pass}_{i,j}:=\bigcup_{t=1}^\infty\mathcal{E}^t_{i,j}$, i.e., the event that $\frac{w_j}{w_i+w_j}\leq b_{w_i+w_j}$ during the period when $i$ and $j$ are in $R$. If $\mathcal{E}^{pass}_{i,j}$ does not happen, item $j$ will not be marked "\textsl{defeat} $i$" by Line 9 of PDTR. By (\ref{EijtB}), we have that for any items $i$ and $j$ such that $\theta_i>\theta_j+\epsilon$,
		\begin{equation}\label{Epass}
			Pr\{\mathcal{E}^{pass}_{i,j}\}\leq \sum_{t=1}^\infty\frac{6\delta^*_1}{\pi^2 t^2}=\delta^*_1.
		\end{equation} 
		
		Next, we let $\mathcal{E}^{reach}_{i,j}$ be the event that when $i$ and $j$ are in $R$, $w_j$ reaches $\frac{1}{4\alpha^2\epsilon^2}\log\frac{1}{\delta^*_1}$ before $w_i$. For items $i$ and $j$ with $\theta_i>\theta_j+\epsilon$, if $\mathcal{E}^{reach}_{i,j}$ happens, we have $w_j\geq w_i$ when $w_i+w_j=2\lceil\frac{1}{4\alpha^2\epsilon^2}\log\frac{1}{\delta^*_1}\rceil$. By Lemma~\ref{pi_ij} and Hoeffding's inequality, it happens with probability at most
		\begin{align}
			&Pr\left\{B\left(2\lceil\frac{1}{4\alpha^2\epsilon^2}\log\frac{1}{\delta^*_1}\rceil,\frac{1}{2}-\alpha\epsilon\right)\geq \lceil\frac{1}{4\alpha^2\epsilon^2}\log\frac{1}{\delta^*_1}\rceil \right\}\nonumber\\
		 	& \leq \exp\left\{-4\lceil\frac{1}{4\alpha^2\epsilon^2}\log\frac{1}{\delta^*_1}\rceil\alpha^2\epsilon^2\right\} \leq \delta^*_1,
		\end{align}
		where $B(n,p)$ represents a Binomial random variable with parameters $n$ and $p$. Thus, it holds that 
		\begin{equation}\label{Ereach}
			Pr\{\mathcal{E}^{reach}_{i,j}\}\leq \delta^*_1.
		\end{equation} 
		
		\textbf{Step 2: Correctness of PDTR.} Define the set of "bad" events $\mathcal{E}^{bad}:=\bigcup_{i,j\in A:\theta_i>\theta_j+\epsilon}[\mathcal{E}^{pass}_{i,j}\cup \mathcal{E}^{reach}_{i,j}]$. For each $i$ and $j$ with $\theta_i>\theta_j+\epsilon$, $\mathcal{E}^{pass}_{i,j}\cup \mathcal{E}^{reach}_{i,j}$ happens with probability at most $2\delta^*_1$. The number of this kind of $(i,j)$ pairs is at most $\frac{n(n-1)}{2}$. By (\ref{Epass}) and \ref{Ereach}, we have 
		\begin{equation}
			Pr\left\{\mathcal{E}^{bad}\right\} \leq n(n-1)\delta^*_1.
		\end{equation} 
		Now assume that $\mathcal{E}^{bad}$ does not happen. For any items $i,j\in A$ with $\theta_i>\theta_j+\epsilon$, if $j$ ranks higher than $i$ in $\Pi$, then during the time when they are in $R$, either: (i) $w_j$ reaches $\frac{1}{4\alpha^2\epsilon^2}\log\frac{1}{\delta^*_1}$ before $w_i$ or (ii) $j$ \textsl{defeats} $i$ in Line 9, and contradicts the assumption. Thus, the returned value is a correct $\epsilon$-ranking with probability at least $1-n(n-1)\delta^*_1$.
		
		\textbf{Step 3: Sample complexity of PDTR.} In this step, we consider all positive $\alpha$ values. We say a query is \textsl{useful} if its query result (i.e., the winner) is in $R$ at the time when the result is revealed, and is \textsl{useless} otherwise. 
		
		Here we introduce an inequality directly derived from Chernoff Bound. Let $X^1,X^2,...,X^t$ be $t$ independent Bernoulli random variables, and for all $i$, $\mathbbm{E}X^i\geq p$. Define $S:=\sum_{i=1}^t{X^i}$. Let $B(t,p)$ denote a Binomial random variable with parameters $t$ and $p$. For any $b \leq tp$, we have $Pr\{S \leq b\} \leq Pr\{B(t,p)\leq b\}$, and thus, by Chernoff Bound, 
		\begin{equation}\label{ChernoffBound}
			Pr\{S \leq b\} \leq \exp\left\{-\frac{t}{2p}\left(p-\frac{b}{t}\right)^2\right\}.
		\end{equation}
		
		We first prove that with probability at least $1-\delta^*_1$, the number of \textsl{useless} queries is upper bounded by $O(\frac{l}{\alpha^2\epsilon^2}\log{\frac{1}{\delta^*_1}})$. As we can see in Lines 2 and 3 of PDTR, \textsl{useless} queries occur only when $|R|<l$. And after the first time $|R|<l$, every query will involve all the remaining items in $R$. Let $j$ be the last item added to $\Pi$. We have that $j$ wins at most $\frac{1}{4\alpha^2\epsilon^2}\log{\frac{1}{\delta^*_1}}$ queries. It also holds that all \textsl{useless} queries involve item $j$ by Lines 2 and 3. Under the RBC condition, for any query that involves item $j$, the probability that item $j$ wins this query is at least $\frac{1}{lC}$. 
		
		Here we let $b=\frac{1}{4\alpha^2\epsilon^2}\log\frac{1}{\delta^*_1}$ and $m=\lceil 2lC(\frac{1}{\delta^*_1} + b)\rceil$. Say at some time, there have been $m$ queries involving $j$. For each of these queries, item $j$ wins it with probability at least $\frac{1}{lC}$. Let $X$ denote the number of queries $j$ wins. By (\ref{ChernoffBound}), we can get 
		\begin{equation}
			Pr\{X \leq b\} \leq \exp\left\{-\frac{lC}{2}m\left(\frac{1}{lC}-\frac{b}{m}\right)^2\right\} \leq \delta^*_1.
		\end{equation}
		
		Thus, after $m$ queries involving $j$, item $j$ wins at least $\frac{1}{4\alpha^2\epsilon^2}\log\frac{1}{\delta^*_1}$ queries with probability at least $1-\delta^*_1$. Since every \textsl{useless} query involves $j$ and the algorithm terminates immediately after $j$ wins $\frac{1}{4\alpha^2\epsilon^2}\log\frac{1}{\delta^*_1}$ queries, the number of \textsl{useless} queries is upper bounded by $m=O(\frac{l}{\alpha^2\epsilon^2}\log{\frac{1}{\delta^*_1}})$ with probability at least $1-\delta^*_1$. 
		
		As for the number of \text{useful} queries, since any item is removed from $R$ immediately after it wins $\lceil\frac{1}{4\alpha^2\epsilon^2}\log\frac{1}{\delta^*_1}\rceil$ queries, the number of \textsl{useful} queries is at most $O(\frac{n}{\alpha^2\epsilon^2}\log\frac{1}{\delta^*_1})$. Since $\delta^*_1 = \Theta({\delta}/{n})$, the stated sample complexity follows. Combining Step 2 and Step 3 completes the proof.
	\end{proof}

\section{Proof of Lemma~\ref{kappaCorrectness}}\label{section12}
	\begin{proof}
		Fix $\alpha < \frac{l-1}{4(l+C-1)}$. Let $[\kappa]$ be the set of the $\kappa$ most preferred items of $A$. An item is said to be \textsl{bad} if its preference score is less than $\theta'_{[\kappa]}-\epsilon$. Given $i\in[\kappa]$ and \textsl{bad} item $j$, we have $\theta_i > \theta_j+\epsilon$. Following the same steps as in the proof of Theorem~\ref{TP-TR}, we have that during the time when $i$ and $j$ are in $R$, with probability at least $1-2\delta^*_2$, item $j$ does not \textsl{defeat} $i$ in Line 9 and $w_j$ does not reach $\frac{1}{4\alpha^2\epsilon^2}\log\frac{1}{\delta^*_2}$ before $w_i$. The number of this kind of $i,j$ pairs is at most $\kappa(n-1)$. Thus, it follows that with probability at least $1-2\kappa(n-1)\delta^*_2$, the following two statements hold: (i) no item in $[\kappa]$ is \textsl{defeated} by any \textsl{bad} item in Line 9; (ii) no \textsl{bad} item wins $\frac{1}{4\alpha^2\epsilon^2}\log\frac{1}{\delta^*_2}$ queries before any item in $[\kappa]$. 
		
		Now assume that the above two statements hold. We first consider the case where some item $i$ in $[\kappa]$ is not added to $Ans$. To be added to $Ans$, one item must either \text{defeat} $i$ in Line 9 of PDKS or win $\frac{1}{4\alpha^2\epsilon^2}\log\frac{1}{\delta^*_2}$ queries before $i$. When the above two statements hold, no \textsl{bad} item is added to $Ans$, and the desired result holds. For the case where all items in $[\kappa]$ are added to $Ans$, the desired result trivially holds.
		
		The two statements hold with probability at least $1-2\kappa(n-1)\delta^*_2$, and thus, the desired result follows.
	\end{proof}

\section{Proof of Theorem~\ref{TP-TopK1}}\label{section13}
	\begin{proof}
		First fix $\alpha\leq \frac{l-1}{4(l+C-1)}$. Letting $\kappa=k$ in Lemma~\ref{kappaCorrectness}, we have that with probability at least $1-2k(n-1)\delta^*_2$, the returned value is correct. 
		
		As for the sample complexity, we consider all positive $\alpha$ values. The number of \textsl{useful} queries is obviously upper bounded by $O(\frac{n}{\alpha^2\epsilon^2}\log\frac{1}{\delta^*_2}) = O(\frac{n}{\alpha^2\epsilon^2}\log\frac{n}{\delta})$ (recall that $\delta^*_2=\Theta({\delta}/{n})$). 
		
		It remains to prove that the number of \textsl{useless} queries is upper bounded by $O(\frac{l}{\alpha^2\epsilon^2}\log\frac{n}{\delta})$ with probability $1-\delta^*_2$. 
		
		As we can see in Lines 2 and 3 of PDKS, \textsl{useless} queries occur only when $|R|<l$. And after the first time when $|R|<l$, every query will involve all the remaining items in $R$. Let $j$ be the last item added to $Ans$. Item $j$ wins at most $\frac{1}{4\alpha^2\epsilon^2}\log\frac{1}{\delta^*_2}$ queries, and every \textsl{useless} query involves $j$. For each query involving item $j$, by the RBC condition, item $j$ wins this query with probability at least $\frac{1}{lC}$. Let $m = \lceil 2lC( \frac{1}{4\alpha^2\epsilon^2}\log\frac{1}{\delta^*_2}+ \log\frac{1}{\delta^*_2})\rceil$. By (\ref{ChernoffBound}), after $m$ queries involving $j$, the probability that $j$ wins no more than $\frac{1}{4\alpha^2\epsilon^2}\log\frac{1}{\delta^*_2}$ queries is upper bounded by
		\begin{equation}
			\exp\left\{-\frac{lC}{2}m\left(\frac{1}{lC}-\frac{\frac{1}{4\alpha^2\epsilon^2}\log\frac{1}{\delta^*_2}}{m}\right)^2\right\} \leq \delta^*_2.
		\end{equation}
		Thus, the number of \textsl{useless} queries is upper bounded by $m = O(\frac{l}{\alpha^2\epsilon^2}\log\frac{1}{\delta^*_2}) = O(\frac{l}{\alpha^2\epsilon^2}\log\frac{n}{\delta})$ with probability at least $1-\delta^*_2$. This completes the proof.
	\end{proof}

\section{Proof of Theorem~\ref{TP-TopK2}}\label{section14}
	\begin{proof}
		 We prove this theorem by induction. First we fix $\alpha \leq \frac{l-1}{4(l+C-1)}$. Let $\mathcal{E}^{correct}_s$ be the event that $|T_s\cap U_{k,\sum_{r=1}^s\epsilon_r}|\geq k$. $\mathcal{E}^{correct}_0$ happens with probability 1 as $T_0=A$. 
		
		Now we assume that $\mathcal{E}^{correct}_{s-1}$ happens, we want to show $\mathcal{E}^{correct}_{s}$ happens with probability at least $1-\delta_s$. Pick $k$ elements of $T_{s-1}\cap U_{k,\sum_{r=1}^{s-1}\epsilon_r}$, and name them \textsl{key} items. Here we define $\delta':=\frac{\delta_s}{2k(m-1)+1}$. Note that all \textsl{key} items have preference scores at least $\theta_{[k]}-\sum_{r=1}^{s-1}\epsilon_r$. At round $s$, TNKS calls PDKS$(B,k,\delta_r,\epsilon_r, \alpha)$ for $\lceil |T_{s-1}|/m \rceil$ times. Let $\kappa_t$ be the number of \textsl{key} items involved in the $t$-call of PDKS. By Lemma~\ref{kappaCorrectness}, with probability at least $1-2\kappa_t(m-1)\delta'$, the returned value of the $t$-th call contains $\kappa_t$ items with preference scores at least $\theta_{[k]}-\sum_{r=1}^{s}\epsilon_r$. Thus, with probability at least $1-2k(m-1)\delta'$, $\mathcal{E}^{correct}_s$ happens. 
		
		Now, we focus on the sample complexity of round $s$. An item is said to be \textsl{last} if it is the last item added to $Ans$ in some call of PDKS. Each \textsl{last} item wins at most $\frac{1}{4\alpha^2\epsilon^2}\log\frac{1}{\delta'}$ queries before the corresponding call returns. As we can see in Lines 2 and 3 of PDKS, \textsl{useless} queries occur only when $|R|<l$. After the first time $|R|<l$, every query will involve all the remaining items in $R$. Thus, every \textsl{useless} query involves some \textsl{last} item and there are at most $\lceil |T_{s-1}|/m \rceil$ \textsl{last} items in round $s$. Following the similar steps as in the proof of Theorem~\ref{AL-TR}, by (\ref{ChernoffBound}), after at most $2lC(\frac{\lceil |T_{s-1}|/m \rceil}{4\alpha^2\epsilon^2}\log\frac{1}{\delta'} + \log\frac{1}{\delta'}) = O(\frac{|T_{s-1}|}{\alpha^2\epsilon^2}\log\frac{m}{\delta_s})$ queries involving some \textsl{last} item, with probability at least $1-\delta'$, every \textsl{last} items wins $\frac{1}{4\alpha^2\epsilon^2}\log\frac{1}{\delta'}$ queries (implies that all calls of PDKS have returned). Thus, the number of \textsl{useless} queries is upper bounded by $O(\frac{|T_{s-1}|}{\alpha^2\epsilon^2}\log\frac{m}{\delta_s})$ with probability at least $1-\delta'$.
		
		As for the number of \textsl{useful} queries, we have that the number of \textsl{useful} queries of each call of PDKS is upper bounded by $O(\frac{m}{\alpha^2\epsilon^2}\log\frac{m}{\delta_s})$. Noting that $(2k(m-1)+1)\delta' = \delta_s$, thus, for round $s$ of TNKS, with probability at least $1-\delta_s$, it takes $O(\frac{|T_s-1|}{\alpha^2\epsilon^2}\log\frac{m}{\delta})$ queries, and $\mathcal{E}^{correct}_{s}$ happens.
		
		Thus, we can conclude that with probability at least $1-\sum_{r=1}^s\delta_r$, $\mathcal{E}^{correct}_{r}$ happens and round $r$ of TNKS takes at most $O\left(\frac{|T_{r-1}|}{\alpha^2\epsilon_r^2}\log\frac{m}{\delta_r}\right)$ queries for all $r \leq s$.
		
		Noting that $|T_r|=O\left(\frac{n}{2^r}\right)$ (by (\ref{T_rSize})) and $m\leq \min\{2k, l+k-1\}$, it holds that
		\begin{equation}\begin{split}
			\sum_{r=1}^\infty\frac{|T_{r-1}|}{\alpha^2\epsilon_r^2}\log\frac{m}{\delta_r} & = O\left(\sum_{r=1}^\infty\frac{n}{2^r}\left(\frac{5}{4}\right)^r\frac{1}{\alpha^2\epsilon^2}\log{\frac{m r^2}{\delta}}\right) \\
			& = O\left(\frac{n}{\alpha^2\epsilon^2}\log\frac{k+l}{\delta}\right).
		\end{split}\end{equation} 
		We also have $\sum_{r=1}^\infty\delta_r=\delta$. Thus, with probability at least $1-\delta$, the algorithm terminates after $O\left(\frac{n}{\alpha^2\epsilon^2}\log\frac{k+l}{\delta}\right)$ $l$-wise queries, and returns a correct $\epsilon$-top-$k$ subset of $A$.
		
		For $\alpha >  \frac{l-1}{4(l+C-1)}$, since the derivation of sample complexity does not need the condition $\alpha \leq \frac{l-1}{4(l+C-1)}$, with probability at least $1-\delta$, the algorithm returns after $O(\frac{n}{\epsilon^2}\log\frac{k+l}{\delta})$ queries. This completes the proof.
	\end{proof}

\end{document}